  \providecommand\BibTeX{{%
    Bib\TeX}}}
\newcolumntype{H}{>{\setbox0=\hbox\bgroup}c<{\egroup}@{}}
  \providecommand\BibTeX{{%
    \normalfont B\kern-0.5em{\scshape i\kern-0.25em b}\kern-0.8em\TeX}}}
\newtheorem{definition}{Definition}
\newcommand*{\mline}[1]{%
\begingroup
    \renewcommand*{\arraystretch}{1.9}%
   \begin{tabular}[c]{@{}>{\centering\arraybackslash}p{8cm}@{}}#1\end{tabular}%
  \endgroup
}
\begin{document}

\title{Achieving Distributive Justice in Federated Learning via Uncertainty Quantification}

\author{Alycia N. Carey}
\email{ancarey@uark.edu}
\affiliation{%
  \institution{University of Arkansas}
  \streetaddress{227 N Harmon Ave}
  \city{Fayetteville}
  \state{AR}
  \country{USA}
  \postcode{72762}
}
\author{Xintao Wu}
\email{xintaowu@uark.edu}
\affiliation{%
  \institution{University of Arkansas}
  \streetaddress{227 N Harmon Ave}
  \city{Fayetteville}
  \state{AR}
  \country{USA}
  \postcode{72762}
}

\renewcommand{\shortauthors}{Carey and Wu}

\begin{abstract}
Client-level fairness metrics for federated learning are used to ensure that all clients in a federation either: a) have similar final performance on their local data distributions (i.e., \textit{client parity}), or b) obtain final performance on their local data distributions relative to their contribution to the federated learning process (i.e., \textit{contribution fairness}). While a handful of works that propose either client-parity or contribution-based fairness metrics ground their definitions and decisions in social theories of equality -- such as distributive justice -- most works arbitrarily choose what notion of fairness to align with which makes it difficult for practitioners to choose which fairness metric aligns best with their fairness ethics. In this work, we propose \textbf{\textit{UDJ-FL}} (\textit{Uncertainty-based Distributive Justice for Federated Learning}), a flexible federated learning framework that can achieve multiple distributive justice-based client-level fairness metrics. Namely, by utilizing techniques inspired by fair resource allocation, in conjunction with performing aleatoric uncertainty-based client weighing, our UDJ-FL framework is able to achieve egalitarian, utilitarian, Rawls' difference principle, or desert-based client-level fairness. We empirically show the ability of UDJ-FL to achieve all four defined distributive justice-based client-level fairness metrics in addition to providing fairness equivalent to (or surpassing) other popular fair federated learning works. Further, we provide justification for why aleatoric uncertainty weighing is necessary to the construction of our UDJ-FL framework as well as derive theoretical guarantees for the generalization bounds of UDJ-FL. Our code is publicly available at \url{https://github.com/alycia-noel/UDJ-FL}.
\end{abstract}

\begin{CCSXML}
<ccs2012>
   <concept>
       <concept_id>10010147.10010257</concept_id>
       <concept_desc>Computing methodologies~Machine learning</concept_desc>
       <concept_significance>500</concept_significance>
       </concept>
   <concept>
       <concept_id>10010147.10010919</concept_id>
       <concept_desc>Computing methodologies~Distributed computing methodologies</concept_desc>
       <concept_significance>500</concept_significance>
       </concept>
   <concept>
       <concept_id>10002951.10003227.10003351</concept_id>
       <concept_desc>Information systems~Data mining</concept_desc>
       <concept_significance>500</concept_significance>
       </concept>
   <concept>
       <concept_id>10002950.10003648.10003662</concept_id>
       <concept_desc>Mathematics of computing~Probabilistic inference problems</concept_desc>
       <concept_significance>500</concept_significance>
       </concept>
   <concept>
       <concept_id>10003120</concept_id>
       <concept_desc>Human-centered computing</concept_desc>
       <concept_significance>500</concept_significance>
       </concept>
   <concept>
       <concept_id>10010147.10010178.10010187.10010190</concept_id>
       <concept_desc>Computing methodologies~Probabilistic reasoning</concept_desc>
       <concept_significance>500</concept_significance>
       </concept>
 </ccs2012>
\end{CCSXML}

\ccsdesc[500]{Computing methodologies~Machine learning}
\ccsdesc[500]{Computing methodologies~Distributed computing methodologies}
\ccsdesc[500]{Information systems~Data mining}
\ccsdesc[500]{Mathematics of computing~Probabilistic inference problems}
\ccsdesc[500]{Human-centered computing}
\ccsdesc[500]{Computing methodologies~Probabilistic reasoning}

\keywords{distributive justice, fairness, federated learning, uncertainty quantification}


\maketitle

\section{Introduction}
\label{sec:intro}
Federated learning (FL) is a machine learning paradigm that facilitates the joint training of a machine learning model by multiple parties (e.g., mobile devices, organizations, or individuals) under the organization of a central server without the parties explicitly having to share their private local data \cite{mcmahan2017communication}. Due to its potential for solving challenges in domains such as IoT \cite{nguyen2021federated} and healthcare \cite{xu2021federated}, federated learning research has received significant interest especially in the areas of efficiency, privacy, and fairness \cite{kairouz2021advances}. There are two main types of fairness explored in federated learning: client-level fairness and demographic fairness. Client-level fairness criteria are concerned with fairly distributing the performance of the global model across the clients whereas demographic fairness criteria are concerned with ensuring that traditional machine learning fairness definitions such as demographic parity or equalized odds. In this work, we focus our attention on client-level fairness.

As with traditional machine learning, multiple definitions of client-level fairness for federated learning have been proposed. Some works say that all clients should achieve similar final accuracies (often referred to as \textit{client parity}) \cite{li2019fair}. On the other hand, some works say that it is unfair for clients with low quality data to achieve the same final accuracy as clients with high-quality data \cite{donahue2023fairness,chu2022focus,ray2022fairness,lyu2020collaborative} and propose that clients should receive final performance relative to their contribution to the overall federated learning process (termed \textit{contribution fairness}). Most of these proposed works, however, only vaguely discuss the philosophical underpinnings of their chosen fairness definition, which makes it difficult for practitioners to choose the fairness metric that aligns best with their fairness ethics. Additionally, most of the proposed methods are nonflexible, and only provide solutions for one, or infrequently two, client-level fairness definitions. In this work, in addition to grounding all of our proposed client-level fairness definitions in the social psychology theory of distributive justice, we provide a flexible framework that allows practitioners to switch between the fairness definitions without requiring them to significantly alter the federated training routine. 

Distributing final model performance in relation to some chosen fairness criteria is perfectly encapsulated by the framework of distributive justice from social psychology. In this work, we focus on four major thought theories of distributive justice that align well with the federated learning setting: \textit{strict egalitarianism}, \textit{desert}, \textit{Rawls' difference principle}, and \textit{utilitarianism}. Strict egalitarianism is the concept of fairness where every person in a society receives the same level of material goods or services \cite{sep-justice-distributive}. In federated learning, we relax \textit{strict egalitarianism} to simple \textit{egalitarianism} and it is often seen as models that aim to achieve parity in the final performance of each client\footnote{This relaxation is due to it being infeasible to ensure \textit{equal} final model performance across the clients as each client has differing local data distributions and the probabilistic nature of machine learning.}. Desert-based fairness is the concept that material goods and services should be distributed according to the effort or contribution one gives to a defined goal in society \cite{sep-justice-distributive}. For federated learning, desert-based fairness could be achieved by ensuring higher performance for clients who contribute higher-quality data or those that participate in more rounds of training. Rawls' difference principle states that the only allowable difference in distributing material goods and services are those that help the least advanced in society \cite{sep-justice-distributive}. This can be seen in the federated setting as ensuring good performance for clients with low quality data or poor performing local models. Finally, utilitarianism requires that goods and services are distributed to maximize overall well being in society \cite{sep-justice-distributive}, and this can be seen in the federated learning setting as maximizing the overall average performance. While theoretically simple to understand, implementing distributive justice within federated learning is a non-trivial task. And even though the federated learning research community has begun to utilize techniques such as resource allocation (which can be neatly tied to distributive justice \cite{lan2010axiomatic}) to ensure fair distributions of resources and/or final model performance \cite{li2019fair,zhang2022proportional}, it is still difficult to determine which client is the least advantaged or how to define client contribution. In this work, we show that a client's aleatoric uncertainty -- the uncertainty relating to their local data distribution -- can be used to both specify the advantage level of the client and as a pragmatic value of client contribution. 

In this work, we propose \textbf{\textit{UDJ-FL}} (\textit{Uncertainty-based Distributive Justice for Federated Learning}), a novel federated learning objective that is grounded in distributive justice and utilizes uncertainty quantification-based client weighing. Borrowing ideas from the axiomatic approach to resource allocation \cite{lan2010axiomatic}, as well as techniques for measuring the aleatoric uncertainty of a client's local dataset \cite{mukhoti2023deep}, we construct UDJ-FL such that it can achieve all four notions of distributive justice simply by altering the chosen hyperparameters. In addition to showing how the four distributive justice principles of egalitarianism, utilitarianism, desert, and Rawls' difference principle can be recovered from our UDJ-FL objective, we empirically show that UDJ-FL is able to match (and in many cases surpass) the fairness obtained by popular client-level fair federated learning works. Throughout this work we make commentary on what fairness theory could make sense in certain settings, but we ultimately leave it up to practitioners to choose the theory that best aligns with the required fairness ethics of their specific settings. We simply provide our UDJ-FL framework to give practitioners a principled approach for ensuring distributive justice-based fairness in federated learning.

\section{Related Works}
\label{sec:related}
\begin{wraptable}{r}{0.3\textwidth}
\footnotesize
\caption{\label{tab:references}Popular client-level fairness metrics for federated learning divided along the four axis of distributive justice. E: egalitarian. U: utilitarian. R: Rawls' difference principle. D: desert. \checkmark: method achieves distributive justice theory.}
\begin{tabular}{c|cccc|}
\cline{2-5}
 & \multicolumn{4}{c|}{Theory} \\ \hline
\multicolumn{1}{|c|}{Method} & \multicolumn{1}{c|}{E} & \multicolumn{1}{c|}{U} & \multicolumn{1}{c|}{R} & D \\ \hline
\multicolumn{1}{|c|}{TERM \cite{li2020tilted}} & \checkmark &  &  &  \\
\multicolumn{1}{|c|}{FedFair$^3$ \cite{javaherian2024fedfair}} & \checkmark &  &  &  \\
\multicolumn{1}{|c|}{AdaFed \cite{hamidi2024adafed}} & \checkmark &  &  &  \\
\multicolumn{1}{|c|}{FedFV \cite{wang2021federated}} & \checkmark &  &  &  \\
\multicolumn{1}{|c|}{mFairFL \cite{su2024multi}} & \checkmark &  &  &  \\
\multicolumn{1}{|c|}{FedMGDA+ \cite{hu2020fedmgda+}} & \checkmark &  &  &  \\
\multicolumn{1}{|c|}{AdaFedAdam \cite{ju2024accelerating}} & \checkmark &  &  &  \\
\multicolumn{1}{|c|}{EFFL \cite{gao2023effl}} & \checkmark &  &  &  \\
\multicolumn{1}{|c|}{FedFa \cite{wei2020fairness}} & \checkmark &  &  &  \\
\multicolumn{1}{|c|}{E2FL \cite{mozaffari2022e2fl}} & \checkmark &  &  &  \\
\multicolumn{1}{|c|}{FedEBA+ \cite{wang2023fedeba+}} & \checkmark & \checkmark &  &  \\
\multicolumn{1}{|c|}{FedAvg \cite{mcmahan2017communication}} &  & \checkmark &  &  \\
\multicolumn{1}{|c|}{$q$-FFL \cite{li2019fair}} &  & \checkmark & \checkmark &  \\
\multicolumn{1}{|c|}{PropFair \cite{zhang2022proportional}} &  & \checkmark & \checkmark &  \\
\multicolumn{1}{|c|}{AFL \cite{mohri2019agnostic}} &  &  & \checkmark &  \\
\multicolumn{1}{|c|}{CoreFed \cite{ray2022fairness}} &  &  &  & \checkmark \\
\multicolumn{1}{|c|}{FOCUS \cite{chu2022focus}} &  &  &  & \checkmark \\
\multicolumn{1}{|c|}{CFFL \cite{lyu2020collaborative}} &  &  &  & \checkmark \\
\multicolumn{1}{|c|}{\textbf{UDJ-FL (ours)}} & \checkmark & \checkmark & \checkmark & \checkmark \\ \hline
\end{tabular}
\end{wraptable}
\paragraph{\textbf{Client-Level Fairness in Federated Learning}}

In Table \ref{tab:references} we categorize popular client-level fairness frameworks for federated learning along the four distributive justice categories we consider in this work. While some of the works explicitly mention which category they belong to (e.g., \cite{li2019fair}), we classify the others according to the fairness achieved by the proposed solutions. The majority of client-level fairness solutions proposed for federated learning align with the idea of egalitarianism and aim to ensure similar performance by the final global model on all clients in the federation \cite{mozaffari2022e2fl, su2024multi, hamidi2024adafed, wang2021federated, wang2023fedeba+, gao2023effl, wei2020fairness, javaherian2024fedfair,ju2024accelerating}. One of the first works along this direction was \cite{wang2021federated}  which identified that conflicting gradient updates with differences in the magnitudes are one cause of unfairness in federated learning and are a large contribution to why the federated learning process favors some clients over others. In this work, the authors proposed FedFV to mitigate potential conflicts among the clients' gradients before averaging. Several other works, including \cite{su2024multi, hamidi2024adafed, wang2023fedeba+}, built upon the ideas presented in \cite{wang2021federated} and have proposed other gradient alignment methods to obtain egalitarian fairness. Other approaches to egalitarian fairness that have been proposed include \cite{gao2023effl,ju2024accelerating} which used multi-objective optimization and \cite{wei2020fairness, javaherian2024fedfair} which proposed unique client weighing schemes. Here, we note that \cite{li2019fair} classified their $q$-FFL algorithm as a trade-off between egalitarianism and utilitarianism. However, we believe it is more accurate to classify $q$-FFL as a trade-off between utilitarianism and Rawls' difference principle. 

Only a handful of works have proposed solutions for client-level fairness from a non-egalitarian lens -- other than utilitarianism, which is satisfied by federated learning optimization works similar to traditional FedAvg \cite{mcmahan2017communication}. The main works that considered Rawls' DP (difference principle) fairness (or a trade-off between utilitarianism and Rawls' DP) are \cite{mohri2019agnostic, li2019fair, zhang2022proportional}. In \cite{mohri2019agnostic}, the authors proposed Agnostic Federated Learning (AFL) which optimizes the global model for any target distribution (formed by a mixture of the client distributions) instead of optimizing for the uniform distribution which is the standard in federated learning. To do so, the authors proposed a $\min\max$ learning scheme that achieves ``good intent'' fairness\footnote{Here, ``good intent'' fairness can be seen as Rawls' DP since it aims to improve the least advantaged client by focusing on the client with the highest loss.}. In \cite{li2019fair}, the authors proposed $q$-FFL which is based upon the resource allocation method of $\alpha$-fairness. $q$-FFL minimizes an aggregate reweighted loss parameterized by the variable $q$ such that clients with higher local losses are given higher weight. As $q\to\infty$, the authors note that $q$-FFL recovers AFL while as $q\to 0$ it recovers standard FedAvg which is one of the main arguments for why we classify it as achieving a trade-off between utilitarianism and Rawls' DP rather than egalitarianism. In \cite{zhang2022proportional}, the authors proposed PropFair which is based on proportional fairness to ensure fairness in the relative change of each client's performance. Finally, there are a few works proposed for achieving desert-based client-level fairness \cite{lyu2020collaborative,ray2022fairness,chu2022focus} and they all utilize some notion of client clustering to ensure proper distribution of model performance based on client contribution to the overall federated learning process. We note that while each of these proposed methods aim to achieve a single (or a trade-off between two different) client-level fairness metrics, and often use na{\"i}ve weighing approaches based on number of rounds participated or dataset size, in our work, we propose a general framework that is able to achieve all four distributive justice fairness principles using the principled approach of weighing clients based on their aleatoric uncertainty score. 

\paragraph{\textbf{Uncertainty}}
A core part of our proposed UDJ-FL framework is the use of uncertainty quantification to generate the client weights. Similar to general machine learning, uncertainty quantification has garnered much attention in the federated setting. However, as opposed to using uncertainty quantification values to achieve a notion of fairness like we do in UDJ-FL, most works focus on simply enabling uncertainty quantification in the federated setting or use uncertainty quantification values to optimize the federated learning process. For instance, in \cite{chen2022self}, the authors developed a personalized federated learning method where each client can automatically balance the training of their local model with the global model based on the calculated inter-client and intra-client uncertainty. Further, in \cite{bhatt2024federated} the authors presented an improved method of estimating uncertainty values in federated learning. We also note that in the centralized machine learning setting, the connection between uncertainty and fairness has received some interest. For example, in \cite{kuzucu2023uncertainty} the authors showed that machine learning models can be fair in regard to point-based fairness measures, but still be biased against a certain demographic group in terms of prediction uncertainty. Additionally, in \cite{tahir2023fairness} the authors showed that aleatoric uncertainty can be used to improve the fairness-utility trade-off. To our knowledge, we are the first to study the use of uncertainty quantification measures as weights for achieving distributive justice-based fairness in federated learning.

\section{Preliminary}
\label{sec:prelim}

\subsection{Federated Learning}
Federated learning is a machine learning setting where multiple clients (e.g., mobile devices, whole organizations, or individuals) collaboratively train a machine learning model under the orchestration of a central server, while keeping the training data decentralized. The most popular federated learning algorithm is Federated Averaging (FedAvg) which was proposed by McMahan et al. in 2016 \cite{mcmahan2017communication}. FedAvg aims to solve the following objective: 
\begin{equation}
    \min_{\theta} h(\theta) = \sum_{i=1}^N\frac{n_i}{\sum_{j=1}^Nn_j}H_i(\theta)
\end{equation}
where $N$ is the number of clients, $n_i$ is the number of data points held by client $i$, and $H_i(\theta) = \frac{1}{n_i}\sum_{j=1}^{n_i}\ell_j(\theta)$ is the empirical risk of client $i$. Na{\"i}vely implementing FedAvg can lead to client-level unfairness as it places more importance on clients that have larger local datasets. And, in cases where the largest clients also have low quality data (i.e., high aleatoric uncertainty), the performance of the federated model can be sub-optimal. We discuss this idea further in Appendix \ref{app:experimental}.

\subsection{Distributive Justice}
The principles of distributive justice can be thought of as providing moral guidance for the political processes and structures that affect the distribution of benefits and burdens in societies \cite{sep-justice-distributive}. Many theories of distributive justice have been proposed which often contrast each other in terms of how goods should be distributed and who in society should benefit\footnote{We note that while UDJ-FL is able to achieve all four distributive justice theories by changing the hyperparameter settings, it can only achieve one distributive justice theory at a time. I.e., UDJ-FL cannot achieve two (or more) distributive justice ideals concurrently.}. In this work, we focus on four theories of distributive justice due to their inherent connection with federated learning: \textit{strict egalitarianism, utilitarianism, Rawls' difference principle}, and \textit{desert}-based fairness. For brevity, we list the main definitions of these principles in Appendix \ref{app:dj-definitions}, and refer interested readers to \cite{sep-justice-distributive,rawls1971atheory} for an in-depth discussion on each four of these theories. In this section, we instead provide a case study of a federation of hospitals (extended from \cite{donahue2023fairness}) to show how these four principles can arise in federated learning. 

\textit{Consider three hospitals, $A$, $B$ and $C$ that exist within the same hospital network (e.g., Veterans Health Administration in the United States). The hospital network wants each hospital to be able to accurately classify brain tumor types within MR images and therefore ask the hospitals to jointly train a model through federated learning. The following scenarios could arise:
\begin{enumerate}
    \item Hospital $A$ is located in a rural area and sees less patients than $B$ and $C$ (which are located in highly populated areas). Therefore, the amount of brain MR images hospital $A$ captures is not enough to train a high performing brain tumor classification model on their own. Hospitals $B$ and $C$ may be asked by the hospital network to enter a federation with hospital $A$ to raise the overall performance ability of hospital $A$ in being able to properly classify brain tumor MR images. In this setting, since hospitals $B$ and $C$ do not necessarily look to benefit from the training (as they could train a high performing model on their own), and rather aim to increase the ability of the least advantaged in the network (hospital $A$), the federation would strive to satisfy \underline{Rawls' difference principle}.
    \item Hospital $A$ receives less funding than hospitals $B$ and $C$ to purchase state-of-the-art MRI equipment and therefore is unable to capture high quality images for training a model. However, the hospital network still wants patients to receive the same quality of care regardless of which hospital they visit. Rather than allocate more funding to hospital $A$, the network may ask hospitals $B$ and $C$ to rectify this inequality by producing a model that performs similarly across all of the hospitals' datasets. In this case, the hospitals would aim to achieve \underline{egalitarian} fairness \cite{donahue2023fairness}.
    \item Assuming that all hospitals have access to the same funding and equipment, it could be the case that hospital $A$ chose to spend less time/money on capturing high quality MR images. In this case, the hospital network may want to reward hospitals $B$ and $C$ for their efforts by ensuring the final model has favorable performance on their local data distributions over hospital $A$'s. In this case, the federation would aim to achieve \underline{desert} fairness\footnote{\cite{donahue2023fairness} refers to this type of fairness as proportional fairness where instead we refer to it as desert fairness.} \cite{donahue2023fairness}.
    \item Suppose that all of the hospitals have access to the same funding and equipment, and choose to spend time and money to capture high quality MR images, but reside in different geographical areas. It could be the case that only certain brain tumor types appear in each geographical region, and therefore, a single hospital would not be able to accurately classify all the different varieties of brain tumors. In this case, the hospital network may have the hospitals enter a federation and jointly train a model that achieves high overall accuracy on all brain tumor types. Here, the hospitals would be striving to achieve \underline{utilitarian} fairness.
\end{enumerate}}

We note that it is not our intention or goal in this work to claim/prove one theory of distributive justice is better than the others. Rather, we believe that all of the distributive justice theories can be valid ethical ideals depending on the setting. This is a main contributor to why we structured UDJ-FL to achieve all four of the theories so that practitioners can choose which best suits their needs.

\subsection{Uncertainty Quantification}
\label{sec:uncertainty_quant}
While uncertainty has long been studied in the field of classical statistics, the machine learning community has only recently begun to estimate and analyze uncertainty in order to produce more reliable model predictions \cite{wimmer2023quantifying}. Often, attention is paid to \textit{predictive uncertainty} which is the uncertainty related to the prediction $\hat{y}\in\mathcal{Y}$ for a specific query $\bm{x}\in\mathcal{X}$. Since the prediction $\hat{y}\in\mathcal{Y}$ is the final outcome of a multitude of different learning and approximation steps, all of the error and uncertainty related to these steps may also contribute to the uncertainty about $\hat{y}$ (i.e., the predictive uncertainty). First, since the mapping of $\mathcal{X}$ to $\mathcal{Y}$ is non-deterministic, the true model output is a conditional probability distribution over $\mathcal{Y}$ rather than a single label $\hat{y}$: $\mathbb{P}(y \mid \bm{x}) = \frac{\mathbb{P}(\bm{x},y)}{\mathbb{P}(\bm{x})}$. This type of uncertainty is termed \textit{aleatoric uncertainty} (or data uncertainty), and since it is not a property of the model, but rather is an inherent property of the underlying data distribution, it is irreducible \cite{abdar2021review}. Potential sources of aleatoric uncertainty include measurement errors, randomness in physical quantities, and the simple fact that $\bm{x}$ may not suffice to explain $y$ \cite{wimmer2023quantifying}\footnote{Throughout this work we use ``data quality'' as a proxy for aleatoric uncertainty without explicitly stating so. Therefore, we consider aleatoric uncertainty in general and assume it can be caused by generalization/measurement/or any other type of error. We believe that regardless of how the error is generated, data quality will always be affected and therefore the motivation for our work holds. In the future, a paper dedicated to the effect of aleatoric uncertainty source on our approach would be interesting to undertake.}. For example, consider flipping a coin. No matter how many times the coin is flipped (to say, use the results to train a prediction model), it is not possible to say with 100\% certainty that the next coin flip will be heads (or tails). In contrast \textit{epistemic uncertainty} (otherwise known as model uncertainty or knowledge uncertainty) occurs due to the model having inadequate knowledge, and therefore this type of uncertainty can be reduced on the basis of additional information \cite{hullermeier2021aleatoric}. For example, the standard monolingual native English speaker is not able to instinctively know the pronunciation and definition of Chinese characters, but when given access to additional resources like a translator, they will most likely be able to derive their meaning. 

In machine learning, predictive uncertainty is often decomposed into aleatoric and epistemic uncertainty using (discrete) \textit{Shannon entropy} \cite{wimmer2023quantifying}:
\begin{equation}
    \mathbb{H}(Y) = -\sum_{y\in\mathcal{Y}}\mathbb{P}(y)\cdot\log\mathbb{P}(y)
\end{equation}
since it has been shown that Shannon entropy additively decomposes into \textit{conditional entropy} (aleatoric uncertainty) and \textit{mutual information} (epistemic uncertainty) \cite{ash2012information}:
\begin{equation}
    \mathbb{H}(Y) = \underbrace{\mathbb{H}(Y\mid\Theta)}_{\substack{\text{aleatoric}}} + \underbrace{I(Y,\Theta)}_{\substack{\text{epistemic}}}
\end{equation}
where $\Theta$ is a random variable of the possible (first-order) model distributions $\bm{\theta}\sim Q$, and $Q$ is a second-order probability distribution over $\bm{\theta}$. Here, $\mathbb{H}(Y\mid\Theta)$ and $I(Y,\Theta)$ can be calculated as:
\begin{equation}
\label{eq:conditional-entropy}
    \begin{aligned}
        & \mathbb{H}(Y\mid\Theta) = \mathbb{E}_{Q}[\mathbb{H}(Y\mid\bm{\theta})]  = -\sum_{k=1}^K\sum_{y\in\mathcal{Y}}-\mathbb{P}(y\mid\theta_k)\log\mathbb{P}(y\mid\theta_k) \\
        &  I(Y,\Theta) = \mathbb{H}(Y) - \mathbb{H}(Y\mid\Theta) = \mathbb{E}_Q[D_{\text{KL}}(\mathbb{P}(Y\mid\Theta) \parallel \mathbb{P}(Y)]
    \end{aligned}
\end{equation}
where the expectation of the conditional entropy is estimated using an ensemble of $K$ models and $D_{\text{KL}}$ denotes Kullback–Leibler divergence. Conditional entropy $\mathbb{H}(Y\mid\Theta)$ gives the uncertainty about $Y$ that remains even if we knew $\Theta$ (e.g., uncertainty in the \textit{data}) and therefore it is often taken to represent aleatoric uncertainty. On the other hand, mutual information is often taken to represent epistemic uncertainty as it is a symmetric measure for the expected information gained about one variable through observing another \cite{wimmer2023quantifying}. Specifically, it quantifies the possible reduction in uncertainty about $Y$ when $\Theta$ is observed. In this work, we are primarily interested in the aleatoric uncertainty of each client's local data distribution since it can be used to define the least advantaged client, or the contribution of a client throughout training. Since we are using deterministic models (feed-forward neural networks), which produce a softmax distribution $\mathbb{P}(y \mid \bm{x}, \theta)$, we can use either the softmax confidence ($\max_c\mathbb{P}(y=c\mid\bm{x}, \theta)$) or the softmax entropy ($H(Y\mid X, \theta)$) as an estimate of the aleatoric uncertainty. In this work, we specifically use the softmax entropy as our aleatoric uncertainty estimate. 

There are several reasons behind our choice of using aleatoric uncertainty to define contribution/advantage over epistemic/predictive uncertainty or the na{\"i}ve choice of using client dataset size. First, we choose to use aleatoric uncertainty over epistemic/predictive uncertainty since it is an \textit{irreducible} quantity while epistemic and predictive uncertainty can be improved with training. When choosing how to define ``least advantaged client'' or ``contribution'' we want to use an easy to calculate, steady, value that remains the same throughout training. Second, if the clients are weighed according to dataset size, we can end up with a scenario where the largest client has high aleatoric uncertainty. In this case, the performance of the global model will be sub-optimal as the largest update is based on a client that has a larger fundamental limiting lower bound (see Appendix \ref{sec:irreducible-bayes} for more discussion). In this work, however, we use aleatoric uncertainty as our weights, which allows us to selectively give clients with less advantage (e.g., higher aleatoric uncertainty), or more contribution (e.g., lower aleatoric uncertainty), proper attention depending on the wanted fairness. 

\section{Methodology}
\label{sec:method}
In this section, we present the formulation for our UDJ-FL framework and show how the choice of hyperparameters determines what distributive justice theory is satisfied by our framework. We begin by defining our federated learning setting. In this work, we assume a horizontal cross-silo federated learning scenario. Therefore, we assume the federation to be comprised of a small number of clients $N$ (where client $i$ has $n_i$ data points) that participate every round and that the data is partitioned horizontally along the examples (e.g., all clients have the same features and labels, however, each client can have a different distributions of them). When considering distributive justice, the concept of ``advantage'' and/or ``contribution'' has to be defined in a way that makes sense to the overall setting. Most federated learning works take ``advantage'' to be those clients who have more data points or who have lower loss throughout training where ``contribution'' is often measured in terms of how many rounds the clients participate in or how many data points are in their local datasets. However, in the cross-silo setting, each client participates the same number of rounds, and as previously discussed, conflating number of data points with advantage or contribution (e.g., as FedAvg does) can cause the model to perform sub-optimally when the largest clients have low quality data. Therefore, in this work, we choose to define advantage and contribution based on the aleatoric uncertainty level of the client as it provides a principled measure for how useful a clients data is. 

\subsection{UDJ-FL}
In \cite{lan2010axiomatic}, the authors construct a family of weighted fairness measures $f_{\beta}(\bm{x}, \bm{q})$ that covers several fairness definitions in resource allocation, including $\alpha$-fairness \cite{mo2000fair} and Jain's index \cite{jain1984quantitative}. Their fairness measure is given as (\cite{lan2010axiomatic} Eq. 69):
\begin{equation}
\label{eq:main-eq}
f_{\beta}(\bm{x}, \bm{q}) = \textrm{sign}(-r(1+r\beta))\left[\sum_{i=1}^Nq_i\left(x_i\right)^{-r\beta}\right]^\frac{1}{\beta}
\end{equation}
where $\bm{x} = [x_i]_{i=1}^N$ is a resource allocation vector with $x_i$ being the resource allocated to user $i$, $\bm{q} = [q_i]_{i=1}^N$ is a vector of weights where $q_i$ gives the relative importance of client $i$ in quantifying the fairness of the system, $r,\beta\in\mathbb{R}$ are constants where $r$ controls the growth rate of maximum fairness as population size $N$ increases and the choice of $\beta$ recovers the different fairness definitions. For instance, the authors show that $\alpha$-fairness can be derived from Eq. \ref{eq:main-eq} by letting $r=1-\frac{1}{\beta}$. $\alpha$-fairness was originally proposed as a fairness measure for resource allocation where the degree of fairness is set by the parameter $\alpha$ which controls the trade-off between utility and fairness. Specifically, Eq. \ref{eq:main-eq} becomes:
\begin{equation}
\label{eq:derived-alpha}
    |f_\beta(\bm{x},\bm{q})|^\beta = (1-\beta)\sum_{i=1}^N\frac{q_i}{1-\beta}x_i^{1-\beta}
\end{equation}
where $\alpha$-fairness is formulated as:
\begin{equation}
\label{eq:alpha}
    U_{\alpha=\beta} = \begin{cases}
        \frac{1}{1-\beta}x^{1-\beta} & \beta\geq0,\beta\neq1\\
        \log(x) & \beta=1
    \end{cases}
\end{equation}
We derive our formulation for UDJ-FL using ideas similar to how Eq. \ref{eq:derived-alpha} was constructed. Namely, our federated learning objective for UDJ-FL can be written as:
\begin{equation}
\label{eq:objective-function}
\begin{aligned}
    \min_{\theta}  h(\theta)  = |f_{\beta}(\bm{H}(\theta), \bm{\upsilon})|^\beta =\sum_{i=1}^N\upsilon_i^\gamma H_i(\theta)^{r\beta}
\end{aligned}
\end{equation}
where $\bm{H}(\theta) = \left[H_i(\theta)\right]_{i=1}^N$, $H_i(\theta)$ is the empirical risk of client $i$, $\bm{\upsilon}=\left[\frac{\upsilon_i}{\sum_{j=1}^N\upsilon_j}\right]_{i=1}^N$ are the aleatoric uncertainty-based weights of the clients as defined below, and $\gamma\in\{-1, 0, 1\}$. We also note that the change in sign of the exponent $-r\beta$ to $r\beta$ is due to federated learning minimizing costs rather than distributing utilities as in resource allocation. Therefore, $\max\min$ in resource allocation corresponds to $\min\max$ in federated learning \cite{li2019fair}. Since aleatoric uncertainty is taken per example $\bm{x}$ (see Section \ref{sec:uncertainty_quant}), we determine each client's local dataset quality (i.e., their overall aleatoric uncertainty) by taking an average over all calculated aleatoric uncertainty quantities (i.e., we calculate $\mathbb{E}[\mathbb{H}(Y\mid X, \theta)]$) and denote this quantity as $\upsilon$:
\begin{equation}
\label{eq:upsilon}
\begin{aligned}
    \upsilon & = \frac{1}{|D_{tr}|}\sum_{\bm{x}\in D_{tr}}\mathbb{H}(Y\mid \bm{x}, \theta) = \frac{1}{|D_{tr}|}\sum_{\bm{x}\in D_{tr}}\sum_{c\in \mathcal{C}}\frac{e^{f_c(\bm{x}, \theta)}}{\sum_{c'\in \mathcal{C}}e^{f_{c'}(\bm{x}, \theta)}}\log\left(\frac{e^{f_c(\bm{x}, \theta)}}{\sum_{c'\in \mathcal{C}}e^{f_{c'}(\bm{x}, \theta)}}\right)
\end{aligned}
\end{equation}
where $|D_{tr}|$ is the size of the client's train set, $\mathcal{C} = [1, \dots, C]$ are the possible classes, and $f_c(\bm{x}, \theta)$ is the logit produced by the model for the $c$-th class on data point $\bm{x}$. By using the overall aleatoric uncertainty as our weights, as compared to weighing clients according to dataset size, we are able to properly define client contribution and advantage in federated learning to achieve the four distributive justice types.

\subsection{Rawls' Difference Principle Fairness}
In Rawls' difference principle, the \textit{only} allowable difference in material distribution are those that help the least advantaged in society. We note, however, in federated learning it is difficult to ensure \textit{only} the least advantaged improves. Therefore, we slightly relax Rawls' difference principle to the following:
\begin{definition}[Rawls' Difference Principle Fairness for Federated Learning]
For trained models $\theta$ and $\bar{\theta}$, we say that model $\theta$ provides a more Rawls' DP fair solution to the federated learning objective than model $\bar{\theta}$ if the difference between the absolute increase in performance (relative to training under FedAvg) achieved by the client with the highest (worst) local dataset aleatoric uncertainty score $\upsilon_i$ and the the average absolute increase of all other clients (termed absolute increase difference) under model $\theta$ is higher than the difference in performances achieved under model $\bar{\theta}$. 
\end{definition}

In \cite{barsotti2024minmax} the authors show how Rawls' difference principle can be encapsulated by $\min\max$ fairness. Additionally, it has been shown that as $\alpha\to\infty$, $\alpha$-fairness converges to $\min\max$ fairness \cite{mo2000fair}. Similar to \cite{lan2010axiomatic}, we can construct $\alpha$-fairness from Eq. \ref{eq:objective-function} by setting $r=1+\frac{1}{\beta}$ and $\gamma=1$ which produces our UDJ-FL objective function for Rawls' difference principle:
\begin{equation}
\label{eq:rawls-opt}
\begin{aligned}
    \min_{\theta}  h(\theta) = (1+\beta)\sum_{i=1}^N\frac{\upsilon_i}{(1+\beta)}H_i(\theta)^{1+\beta}
\end{aligned}
\end{equation}
By letting $\beta\to\infty$, we recover $\min\max$ fairness: $\min_\theta h(\theta) = \max_i|\upsilon_iH_i(\theta)|$ and therefore Rawls' difference principle. Here, we set $\gamma=1$ as it places more importance on the clients with high aleatoric uncertainty values -- which we define as the least advantaged clients. To measure Rawlsian fairness, we use the following:
\begin{equation}
\label{eq:Psi}
    \Psi = \psi_{i=\arg\max{\bm{\upsilon}}} - \frac{1}{N-1}\sum_{i\neq\arg\max{\bm{\upsilon}}}\psi_i,  \;\;\;\psi_i = \left(Acc^{Method}_i - Acc^{FedAvg}_i\right)
\end{equation}
where $\psi_i$ is the absolute difference between client $i$'s accuracy under the chosen fairness method and the accuracy the client obtained under standard federated averaging. Eq. \ref{eq:Psi} will be positive when the absolute increase of the client with the highest aleatoric uncertainty value is greater than the average absolute increase of all other clients in the federation. A higher value of $\Psi$ indicates greater Rawls' DP fairness.

\begin{wraptable}{r}{0.55\textwidth}
\small
\centering
\vspace{-1cm}
\caption{\label{tab:functions}Derived distributive justice optimization functions and the hyperparameters used in the derivations.}
\renewcommand{\arraystretch}{1.1}
\begin{tabular}{|c|c|ccc|}
\hline
\multirow{2}{*}{Method} & \multirow{2}{*}{Equation} & \multicolumn{3}{c|}{Hyperparameters} \\ \cline{3-5} 
 &  & \multicolumn{1}{c|}{$r$} & \multicolumn{1}{c|}{$\beta$} & \multicolumn{1}{c|}{$\gamma$} \\ \hline
Rawls' DP & $(1+\beta)\sum_{i=1}^N\frac{\upsilon_i}{(1+\beta)}H_i(\theta)^{1+\beta}$ & $1+\frac{1}{\beta}$ & $>0$ & 1 \\
Egalitarian & $\sum_{i=1}^N\upsilon_iH_i(\theta)$ & $1$  & $1$ & 1\\
Utilitarian & $\sum_{i=1}^N\frac{1}{\upsilon_i}H_i(\theta)$ & $1+\frac{1}{\beta}$ & $\to 0$ & -1 \\
Desert & $\frac{1}{N}\sum_{i=1}^N H_i(\theta)^{-\beta_i}$ & 1 & Eq. \ref{eq:individual-beta} & 0 \\ \hline
\end{tabular}
\end{wraptable}
\subsection{Utilitarian Fairness}
Utilitarian fairness in federated learning can simply be seen as trying to maximize the global model's performance. We define the utilitarian fairness of a federated learning model as follows:
\begin{definition}[Utilitarian Fairness for Federated Learning]
For trained models $\theta$ and $\bar{\theta}$, we say that model $\theta$ provides a more Utilitarian fair solution to the federated learning objective than model $\bar{\theta}$ if the average performance obtained by model $\theta$ on the $N$ clients is greater than the average performance obtained by model $\bar{\theta}$ on the $N$ clients.
\end{definition}

Utilitarian fairness can be recovered from $\alpha$-fairness when $\alpha$=0 \cite{lan2010axiomatic}. Therefore, using Eq. \ref{eq:rawls-opt} as a starting point, we can obtain an objective function for utilitarian fairness by letting $r=1+\frac{1}{\beta}$, $\beta\to0$, and $\gamma=-1$:
\begin{equation}
\label{eq:util-opt}
\begin{aligned}
    \min_{\theta}  h(\theta) = \sum_{i=1}^N\frac{1}{\upsilon_i}H_i(\theta)
\end{aligned}
\end{equation}
Here, by setting $\gamma=-1$ we put more emphasis on clients with low aleatoric uncertainty scores which allows the global model to learn from higher quality data samples. We consider overall global model accuracy as the metric for measuring utilitarian fairness.

\subsection{Desert Fairness}
In desert-based fairness, we are concerned with distributing material goods and services according to a person's contribution to the designated task. In federated learning \textit{contribution} can be defined in multiple ways. For instance, it could be taken literally and be defined as the number of times the client participates, or as how many data examples the user contributes (such as is done in FedAvg). However, since we are working in the cross-silo federated setting, equating ``number of rounds participated'' to ``contribution'' is illogical since all clients participate in every round. And, as previously mentioned, weighing clients by the size of their local datasets can lead to poor overall performance if the client with the largest dataset has poor quality data. Therefore, we propose to measure \textit{contribution} as \textit{data quality}. In this work, we define desert fairness for federated learning as:

\begin{definition}[Desert Fairness for Federated Learning]
For trained models $\theta$ and $\bar{\theta}$, we say that model $\theta$ provides a more Desert fair solution to the federated learning objective than model $\bar{\theta}$ if the performance of model $\theta$ on the $N$ clients is distributed more proportionate to the inverse of the clients' aleatoric uncertainty scores (e.g., clients with lower aleatoric uncertainty obtain higher performance from the model) than the distribution obtained under model $\bar{\theta}$ on the $N$ clients.
\end{definition}

The objective function to achieve desert-based fairness is quite similar to the objective function for Rawls' DP fairness. However, instead of considering the update of the most disadvantaged user to be more important, we instead want to place more importance on the clients with the higher quality data. Therefore, we let $\beta\leq0$ and $r=1$. Additionally, we set $\gamma=0$ since extra client weighing, in addition to scaling the clients' losses, will cause the distribution of final model accuracies to be less similar to the distribution of client local aleatoric uncertainty values. More importantly, to ensure the proper distribution of client accuracies, we set $\beta$ individual for each client. Specifically, we set $\beta_i$ as:
\begin{equation}
\label{eq:individual-beta}
    \bm{\beta} = \left[\beta_i=\frac{\frac{1}{\upsilon_i}}{\sum_{j=1}^N\frac{1}{\upsilon_j}}\right]_{i=1}^N
\end{equation}

Starting from Eq. \ref{eq:objective-function}, our objective function for desert fairness becomes: 
\begin{equation}
\label{eq:desert-opt}
\begin{aligned}
    \min_{\theta}  h(\theta) =\frac{1}{N}\sum_{i=1}^N H_i(\theta)^{-\beta_i}
\end{aligned}
\end{equation}

To measure desert fairness, rather than simply comparing the accuracy of the client with the lowest aleatoric uncertainty, we want to measure how close the distribution of client accuracies is to the distribution of the clients local dataset aleatoric uncertainty values. Therefore, we use the Pearson correlation coefficient between the clients' aleatoric uncertainty values and the clients' final local dataset accuracies: 
\begin{equation}
    \label{eq:pearson}
    r_{\bm{\upsilon}, \bm{A}} = \frac{\sum_{i=1}^N(\upsilon_i - \bar{\bm{\upsilon}})(A_i-\bar{\bm{A}})}{\sqrt{\sum_{i=1}^N(\upsilon_i - \bar{\bm{\upsilon}})^2}\sqrt{\sum_{i=1}^N(A_i-\bar{\bm{A}})^2}}
\end{equation}
where $A_i$ is the accuracy of client $i$, $\bar{\bm{\upsilon}}=\frac{1}{N}\sum_{i=1}^N\upsilon_i$, and $\bar{\bm{A}} = \frac{1}{N}\sum_{i=1}^NA_i$. Here, $r_{\bm{\upsilon}, \bm{A}}\to-1$ implies greater desert fairness.

\subsection{Egalitarian Fairness}
In egalitarian fairness, the goal is to distribute resources as equally as possible among a population. In the case of federated learning, egalitarian fairness can be seen as each user obtaining similar accuracies on their local data distributions after the federated learning process has completed. We can define egalitarian fairness over a federated learning performance distribution as:

\begin{definition}[Egalitarian Fairness for Federated Learning]
For trained models $\theta$ and $\bar{\theta}$, we say that model $\theta$ provides a more Egalitarian fair solution to the federated learning objective than model $\bar{\theta}$ if the performance obtained by model $\theta$ on the $N$ clients is distributed more uniformly (e.g., in terms of standard deviation) than the performance achieved under model $\bar{\theta}$ on the $N$ clients. 
\end{definition}

Often in federated learning, Rawls' DP fairness can be conflated with egalitarian fairness \cite{li2019fair}. This is not unexpected, since raising the overall contribution of the least advantaged client (to raise their overall performance) can limit the potential of the other clients, thereby making the final performances of the clients' more similar. In this work however, we make an explicit difference between egalitarian and Rawls' DP fairness. More specifically, when analyzing Rawls' DP fairness
\begin{wrapfigure}{l}{0.35\textwidth}
    \centering
    \includegraphics[width=0.35\textwidth]{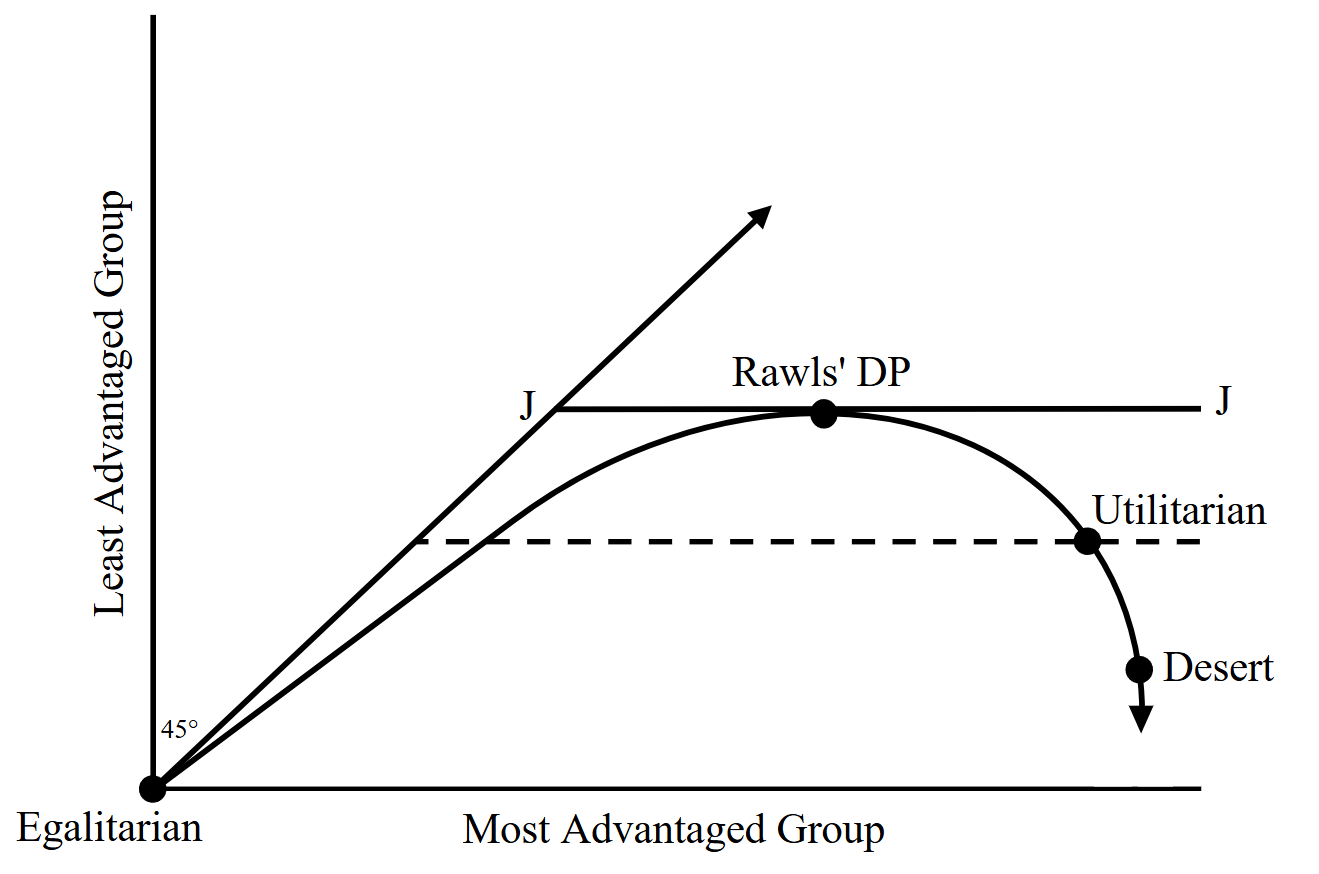}
    \caption{Reproduced from \cite{rawls1971atheory} and \cite{lan2010axiomatic}. Relationship of the four distributive justice theories in terms of the benefit to the most advantaged group and the least advantaged group. The fairness function presented in Eq. \ref{eq:objective-function} can generate any point on the curve depending on the set hyperparameters.}
    \label{fig:dj-graph}
\end{wrapfigure} we only consider the absolute difference between the results under federated averaging and results under the chosen fairness method (Eq. \ref{eq:Psi}) and when analyzing egalitarian fairness methods we only consider the overall spread (in terms of standard deviation) of the clients' final local accuracies. While power scaling is helpful in terms of placing the most importance on the least advantaged client, it does not necessarily make the standard deviation of the clients' accuracies smaller -- which we show in our experiments in Section \ref{sec:experiment}. To construct our egalitarian objective, we allow $r=1$, $\beta=1$, and $\gamma=1$. Our objective function for egalitarian fairness can be written as:
\begin{equation}
\label{eq:egal-opt}
    \quad\quad\quad\quad\quad\quad\quad\quad\quad\quad\quad\quad\quad\quad\quad\quad\quad\quad\min_{\theta}  h(\theta) = \sum_{i=1}^N\upsilon_iH_i(\theta)
\end{equation}
Fig. \ref{fig:dj-graph} shows the relationship between all four distributive justice theories along the efficiency frontier between the least advantaged group and most advantaged group. Here, egalitarian appears at the origin as we strive to have a distribution that equally benefits the least advantaged group (highest aleatoric uncertainty) and the most advantaged group (lowest aleatoric uncertainty). This is one of the reasoning behind setting $r=1$ and $\beta=1$ to ensure egalitarian fairness. However, we still want to increase the contribution of clients with high aleatoric uncertainty so that they can receive equal performance to those clients with high aleatoric uncertainty, therefore we set $\gamma=1$. 

\subsection{Solving UDJ-FL}
We borrow a similar formulation to \cite{li2019fair} to solve our UDJ-FL objectives efficiently. Namely, the authors of \cite{li2019fair} discuss that it is common in fairness applications to have to train a family of objective functions under different hyperparameter settings to find the desired fairness/utility trade-off. In our case, a practitioner may need to train the UDJ-FL objective under multiple $\beta$ values to find the $\beta$ that achieves the desired fairness constraint. However, \cite{li2019fair} note that solving such a family of objective functions requires step-size tuning for every scenario, which can become costly. They propose to overcome this issue by estimating the local Lipschitz constant for the family of objectives by tuning the step size on just one fairness value (e.g., $\beta=0$). Then, the step-size can be manually adjusted using the estimated Lipschitz value in each fairness scenario. The following Lemma (based on Lemma 3 from \cite{li2019fair}) formalizes the relationship between the Lipschitz constant $L$ for $\beta=0$ and $\beta\neq0$:
\begin{lemma}
\label{lem}
    If the non-negative function $h(\cdot)$ has a Lipschitz gradient with constant $L$, then for any $\beta$ and at any point $\theta$,
    \begin{equation}
    \label{eq:lipschitz}
        L_\beta(\theta) = r\beta\upsilon^\gamma\left(Lh(\theta)^{r\beta-1}+(r\beta-1)h(\theta)^{r\beta-2}||\nabla h(\theta)||^2\right)
    \end{equation}
    is an upper-bound for the local Lipschitz constant of the gradient of $\upsilon^\gamma h(\cdot)^{r\beta}$ at point $\theta$.
\end{lemma}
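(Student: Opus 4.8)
The plan is to reduce the statement to an estimate on the spectral norm of the Hessian of the composed map $g(\theta) := \upsilon^\gamma h(\theta)^{r\beta}$, since (exactly as in the proof of Lemma~3 of \cite{li2019fair}) the local Lipschitz constant of $\nabla g$ at a point $\theta$ is measured by $\|\nabla^2 g(\theta)\|$ whenever $h$ is twice differentiable near $\theta$. Because $\upsilon^\gamma$ is a fixed scalar that does not depend on $\theta$, it passes through every differentiation and every norm estimate, so it suffices to analyze $h(\theta)^{r\beta}$ and multiply the resulting bound by $\upsilon^\gamma$ at the end. The hypothesis that $h$ is nonnegative is precisely what makes the real powers $h(\theta)^{r\beta-1}$ and $h(\theta)^{r\beta-2}$ well defined wherever $h(\theta)>0$.

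First I would differentiate twice. Writing $p := r\beta$, the chain rule gives
\begin{equation}
\nabla\bigl(h(\theta)^{p}\bigr) = p\,h(\theta)^{p-1}\,\nabla h(\theta),
\end{equation}
and the product rule together with the chain rule once more gives
\begin{equation}
\nabla^2\bigl(h(\theta)^{p}\bigr) = p(p-1)\,h(\theta)^{p-2}\,\nabla h(\theta)\,\nabla h(\theta)^{\top} + p\,h(\theta)^{p-1}\,\nabla^2 h(\theta).
\end{equation}

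Next I would pass to spectral norms. Using the triangle inequality and submultiplicativity, then inserting the rank-one identity $\|\nabla h(\theta)\,\nabla h(\theta)^{\top}\| = \|\nabla h(\theta)\|^2$ together with the $L$-Lipschitz-gradient bound $\|\nabla^2 h(\theta)\|\le L$ (valid at points of twice differentiability), I obtain
\begin{equation}
\bigl\|\nabla^2\bigl(h(\theta)^{p}\bigr)\bigr\| \le p\Bigl(L\,h(\theta)^{p-1} + (p-1)\,h(\theta)^{p-2}\,\|\nabla h(\theta)\|^2\Bigr).
\end{equation}
Multiplying by $\upsilon^\gamma$ and substituting $p = r\beta$ recovers precisely $L_\beta(\theta)$, which therefore upper-bounds the local Lipschitz constant of $\nabla g$ at $\theta$, as claimed.

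The main obstacle I anticipate is not the calculus but the bookkeeping of signs and degenerate cases, and I would be explicit about the convention used to handle them. When $r\beta<1$ the factor $(r\beta-1)$ is negative, and when $r\beta<0$ or $\upsilon^\gamma<0$ the scalar prefactors flip sign, so the clean formula for $L_\beta(\theta)$ should be read with absolute values on those prefactors (the convention of \cite{li2019fair}), or else restricted to the regime $r\beta\ge 1$, $\gamma\in\{0,1\}$ that is actually exercised in the $\alpha$-fairness derivations of UDJ-FL (Eqs.~\ref{eq:rawls-opt} and \ref{eq:egal-opt}), in which case the inequality chain above goes through verbatim. The second subtlety is the point $h(\theta)=0$, where $h(\theta)^{r\beta}$ can fail to be twice differentiable for $r\beta<2$; following \cite{li2019fair}, I would either exclude such points or invoke that $h$ stays bounded away from zero along the optimization trajectory, so that the bound remains meaningful wherever it is used to set the step size.
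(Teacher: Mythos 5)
Your proposal is correct and follows essentially the same route as the paper's proof: compute the Hessian $\nabla^2\bigl(\upsilon^\gamma h(\theta)^{r\beta}\bigr) = r\beta\upsilon^\gamma h(\theta)^{r\beta-1}\nabla^2 h(\theta) + r\beta\upsilon^\gamma(r\beta-1)h(\theta)^{r\beta-2}\nabla h(\theta)\nabla h(\theta)^{\top}$, bound the two terms by $L$ and $\|\nabla h(\theta)\|^2$ respectively, and read off $L_\beta(\theta)$ as a bound on the spectral norm of the Hessian and hence on the local Lipschitz constant of the gradient. Your additional remarks on the sign of $(r\beta-1)$ and on the degenerate case $h(\theta)=0$ go slightly beyond what the paper states, but they are refinements of the same argument rather than a different one.
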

\begin{proof}
    At any point $\theta$, we can compute the Hessian $\nabla^2\left(\upsilon^\gamma h(\theta)^{r\beta}\right)$ as:
    \begin{equation}
        \nabla^2\left(\upsilon^\gamma h(\theta)^{r\beta}\right) = r\beta\upsilon^\gamma h(\theta)^{r\beta-1}\underbrace{\nabla^2h(\theta)}_{\preceq L \times I} + r\beta\upsilon^\gamma(r\beta-1)h(\theta)^{r\beta-2}\underbrace{\nabla h(\theta)\nabla^Th(\theta)}_{\preceq||\nabla h(\theta)||^2\times I}
    \end{equation}
    As a result, $||\nabla^2\upsilon^\gamma h(\theta)^{r\beta}||_2 \leq L_\beta(\theta) =r\beta\upsilon^\gamma\left(Lh(\theta)^{r\beta-1}+(r\beta-1)h(\theta)^{r\beta-2}||\nabla h(\theta)||^2\right)$.
\end{proof}

\begin{wrapfigure}{l}{0.5\textwidth}
\vspace{-.9cm}
    \begin{minipage}{0.5\textwidth}
      \begin{algorithm}[H]
    \footnotesize
    \begin{algorithmic}[1]
     \caption{\label{alg:UDJ-FL}UDJ-FL}
        \renewcommand{\algorithmicrequire}{\textbf{Input:}}
        \REQUIRE $S$, $T$, $E$, $\beta$, $\gamma$, $r$, $L = 1/\eta$, $i=1,\dots,N$
            \STATE Each client $i$ trains a model individually for $S$ rounds, and on round $S$ generate $\upsilon_i$ according to Eq. \ref{eq:upsilon} and send it to the server
            \STATE Server initializes $\theta^0$
            \FOR{each round $t=1, 2, \dots, T$}
            \STATE Server sends $\theta^t$ to all clients
            \STATE Each client $i$ updates $\theta^t$ for $E$ epochs of SGD on $H_i$ with step-size $\eta$ to obtain $\bar{\theta}^{t+1}$
            \STATE Each client $i$ computes: 
            \begin{equation*}
                \begin{aligned}
                    & \Delta\theta_i^t = L\left(\theta^t - \bar{\theta}_i^{t+1}\right)\\
                    & \Delta_i^t = r\beta\upsilon_i^\gamma H_i(\theta^t)^{r\beta-1}\Delta\theta_i^t \\
                    & g_i^t = r\beta\upsilon_i^\gamma\left(LH_i(\theta^t)^{r\beta-1}+(r\beta-1)H_i(\theta^t)^{r\beta-2}||\Delta\theta_i^t||^2\right)
                \end{aligned}
            \end{equation*}
            \STATE Each client $i$ sends $\Delta_i^t$ and $g_i^t$ back to the server
            \STATE Server updates $\theta^{t+1}$ as:
            \begin{equation*}
                \theta^{t+1} = \theta^t - \frac{\sum_{i=1}^N\Delta_i^t}{\sum_{i=1}^Ng_i^t}
            \end{equation*}
            \ENDFOR
    \end{algorithmic}
\end{algorithm}
    \end{minipage}
  \end{wrapfigure}
We provide the pseudo-code for UDJ-FL in Alg. \ref{alg:UDJ-FL} which is based on the $q$-FedAvg approach presented in \cite{li2019fair}. The UDJ-FL learning process begins by each client individually training a model for $S$ rounds in order to approximate their local dataset's aleatoric uncertainty using Eq. \ref{eq:upsilon}. We choose for the clients to calculate their aleatoric uncertainty value before federated training starts, rather than calculate it each federated round, both to save computational/communication resources, as well as the pre-calculated value being a better estimate of the clients true aleatoric uncertainty level. After each client calculates their aleatoric uncertainty score $\upsilon_i$, they send it to the server and federated training begins. The federated training process is similar to the $q$-FedAvg process described in \cite{li2019fair}. Namely, when $\beta \neq 0$, the $H_i(\theta)^{r\beta}$ term is no longer an empirical average of the loss over all local samples due to the $r\beta$ exponent. Therefore, the averaging scheme is altered so that the step sizes are inferred from the upper bound of the local Lipschitz constant (Eq. \ref{eq:lipschitz}) and the gradient is replaced with the local updates that are obtained by running SGD locally (first equation in line 6). We defer to \cite{li2019fair} for a more in-depth discussion of the changes made to adapt for the $r\beta$ exponent. We also note that while UDJ-FL technically has three hyperparameters ($\beta$, $r$, and $\gamma$) \textit{only $\beta$ has to be tuned}. Both $r$ and $\gamma$ are set by the practitioner before training according to the desired type of fairness. We list the proper choices for $\gamma$ and $r$ in Table \ref{tab:functions} along with the best $\beta$ ranges to test for the specific fairness types.

\begin{table}[t!]
\setlength{\tabcolsep}{5pt}
\renewcommand{\arraystretch}{1.1}
    \centering
    \caption{\label{tab:baseline-distjust}Dataset details and solo test accuracies. Local data = client tests on only their test set. Full Data = client tests on a dataset containing all clients test data to analyze generalizability of the local models.}
    \begin{tabular}{cc|ccccc|}
    \cline{3-7}
     &  & \multicolumn{5}{c|}{\begin{tabular}[c]{@{}c@{}}Client\\ Num. Clean/Ambiguous Shards\end{tabular}} \\ \hline
    \multicolumn{1}{|c|}{Dataset} & Metric & \begin{tabular}[c]{@{}c@{}}1\\ 19/1\end{tabular} & \begin{tabular}[c]{@{}c@{}}2\\ 15/5\end{tabular} & \begin{tabular}[c]{@{}c@{}}3\\ 10/10\end{tabular} & \begin{tabular}[c]{@{}c@{}}4\\ 5/10\end{tabular} & \begin{tabular}[c]{@{}c@{}}5\\ 1/19\end{tabular} \\ \hline
    \multicolumn{1}{|c|}{\multirow{3}{*}{\begin{tabular}[c]{@{}c@{}}Dirty-\\ MNIST\end{tabular}}} & Local Data & 96.39\textsubscript{{\footnotesize$\pm$1.14}} & 94.50\textsubscript{{\footnotesize$\pm$0.50}} & 93.75\textsubscript{{\footnotesize$\pm$0.88}} & 91.25\textsubscript{{\footnotesize$\pm$1.46}} & 91.00\textsubscript{{\footnotesize$\pm$0.88}} \\
    \multicolumn{1}{|c|}{} & All Data & 76.08\textsubscript{{\footnotesize$\pm$14.06}} & 77.25\textsubscript{{\footnotesize$\pm$5.74}} & 69.78\textsubscript{{\footnotesize$\pm$3.78}} & 68.23\textsubscript{{\footnotesize$\pm$2.92}} & 65.58\textsubscript{{\footnotesize$\pm$6.38}} \\
    \multicolumn{1}{|c|}{} & $\upsilon_i$ & 0.03\textsubscript{{\footnotesize$\pm$0.01}} & 0.06\textsubscript{{\footnotesize$\pm$0.01}} & 0.12\textsubscript{{\footnotesize$\pm$0.01}} & 0.19\textsubscript{{\footnotesize$\pm$0.04}} & 0.23\textsubscript{{\footnotesize$\pm$0.04}} \\ \hline
    \multicolumn{1}{|c|}{\multirow{3}{*}{\begin{tabular}[c]{@{}c@{}}CURE\\ -TSR\end{tabular}}} & Local Data & 98.05\textsubscript{{\footnotesize$\pm$0.48}} & 96.94\textsubscript{{\footnotesize$\pm$1.58}} & 96.25\textsubscript{{\footnotesize$\pm$0.72}} & 93.19\textsubscript{{\footnotesize$\pm$1.97}} & 89.44\textsubscript{{\footnotesize$\pm$3.96}} \\
    \multicolumn{1}{|c|}{} & All Data& 71.55\textsubscript{{\footnotesize$\pm$12.45}} & 67.21\textsubscript{{\footnotesize$\pm$0.34}} & 56.89\textsubscript{{\footnotesize$\pm$6.02}} & 48.11\textsubscript{{\footnotesize$\pm$15.81}} & 46.39\textsubscript{{\footnotesize$\pm$6.34}} \\
    \multicolumn{1}{|c|}{} & $\upsilon_i$ & 0.36\textsubscript{{\footnotesize$\pm$0.02}} & 0.49\textsubscript{{\footnotesize$\pm$0.06}} & 0.50\textsubscript{{\footnotesize$\pm$0.09}} & 0.82\textsubscript{{\footnotesize$\pm$0.06}} & 0.91\textsubscript{{\footnotesize$\pm$0.04}} \\ \hline
    \end{tabular}
\end{table}

\section{Experimental Evaluation}
\label{sec:experiment}
\subsection{Dataset, Models, and Baselines}
\label{sec:data}
In this work, we consider a federation of 5 clients, each of which have datasets that are locally IID, while globally non-IID with the other four clients, and use a single hidden layer MLP as our model. We utilize two datasets in our experiments: Dirty-MNIST and CURE-TSR. Dirty-MNIST \cite{mukhoti2023deep} is a modified version of the popular MNIST dataset \cite{lecun1998gradient} that contains additional ambiguous digits (Ambiguous-MNIST). The CURE-TSR \cite{Temel2017_NIPS} dataset is made up of 2 million traffic sign instances over 14 different classes as well as 13 different challenge conditions (e.g., lens glare, rain, snow). In this work, we choose to utilize two of the 13 challenge conditions, 1) no challenge, which acts as the base clean images, and 2) lens glare to introduce ambiguity into the dataset. For both datasets, we control the aleatoric uncertainty level for each client by giving them more/less ambiguous examples in their local dataset. We additionally follow the original non-IID partitioning strategy from \cite{mcmahan2017communication} to split the data among the clients. We sort the datasets' clean and ambiguous images separately according to the image class and create shards that are distributed to the clients. We assign 20 shards to each client where each client receives a different mixture of clean and ambiguous shards to vary the aleatoric uncertainty levels. We list the number of clean/ambiguous shards each client receives in Table \ref{tab:baseline-distjust}. For a baseline, each client trains their own local MLP model for 500 epochs with a learning rate of 0.001 and a batch size of 128. The results of solo training are shown in Table \ref{tab:baseline-distjust}. Here, \textit{local data} refers to the test accuracy on the clients' own data distribution while \textit{all data} refers to the test accuracy on a held out set of clean images that all clients test to show the generalizability of the models. We additionally note that in testing the accuracy of the created global model, we use a held out test set that only contains clean images. We do this with the assumption that while some clients may have unreliable training data, the average test query is of a clean image. We compare UDJ-FL against multiple baselines that span all four of the distributive justice types and provide more detail in Appendix \ref{app:experimental}. In all cases but desert, we test an alternate version of UDJ-FL where instead of weighing clients according to their aleatoric uncertainty level, we weigh clients equally $(\gamma=0)$ to show the importance of using aleatoric uncertainty as weights. In desert-based fairness, however, we intentionally set $\gamma=0$ to avoid the distribution of client final accuracies diverging from the distribution of the clients' local aleatoric uncertainty values, and therefore, in the desert experiments, we use aleatoric weighing as the alternate. 

\begin{table}[t!]
\centering
\caption{Results for DirtyMNIST and CURE-TSR trained using various fair federated learning algorithms. Acc: Accuracy in percentage. $\max_{\upsilon_i}$: Client with highest aleatoric uncertainty score. $\min_{\upsilon_i}$: Client with lowest aleatoric uncertainty score. ($\uparrow$): Higher values favorable. ($\downarrow$): Lower values favorable. \textbf{Bold:} Best result. \underline{Underline:} Second best result. {\color{gray} Gray Results}: unimportant to the fairness type, but are listed for completeness.}
\label{tab:results}
\renewcommand{\arraystretch}{1.3}
\resizebox{\textwidth}{!}{%
\begin{tabular}{cc|cccccc|cccccc|}
\cline{3-14}
 &  & \multicolumn{6}{c|}{DirtyMNIST} & \multicolumn{6}{c|}{CURE-TSR} \\ \hline
\multicolumn{1}{|c|}{Fairness} & Method: Parameters & \multicolumn{1}{c|}{Global Acc \tiny{($\uparrow$)}}  & \multicolumn{1}{c|}{Acc$_{\max{\upsilon_i}}$ \tiny{($\uparrow$)}} & \multicolumn{1}{c|}{Acc$_{\min{\upsilon_i}}$ \tiny{($\uparrow$)}} & \multicolumn{1}{c|}{STD \tiny{($\downarrow$)}} & \multicolumn{1}{c|}{$\Psi$ Eq. \ref{eq:Psi} \tiny{($\uparrow$)}} & $r_{\bm{\upsilon},\bm{A}}$ Eq. \ref{eq:pearson} \tiny{($\downarrow$)} & \multicolumn{1}{c|}{Global Acc \tiny{($\uparrow$)}} & \multicolumn{1}{c|}{Acc$_{\max{\upsilon_i}}$ \tiny{($\uparrow$)}} & \multicolumn{1}{c|}{Acc$_{\min{\upsilon_i}}$ \tiny{($\uparrow$)}} & \multicolumn{1}{c|}{STD \tiny{($\downarrow$)}} & \multicolumn{1}{c|}{$\Psi$ Eq. \ref{eq:Psi} \tiny{($\uparrow$)}} & $r_{\bm{\upsilon},\bm{A}}$ Eq. \ref{eq:pearson} \tiny{($\downarrow$)}\\ \hline

\multicolumn{1}{|c|}{-} & FedAvg & 93.83\textsubscript{{\footnotesize$\pm$1.27}} & 87.33\textsubscript{{\footnotesize$\pm$2.47}} & 96.00\textsubscript{{\footnotesize$\pm$0.52}} & 4.03\textsubscript{{\footnotesize$\pm$0.99}} & - & -0.90\textsubscript{{\footnotesize$\pm$0.09}} & 97.73\textsubscript{{\footnotesize$\pm$0.06}} & 93.63\textsubscript{{\footnotesize$\pm$1.46}} & 97.10\textsubscript{{\footnotesize$\pm$2.33}} & 1.84\textsubscript{{\footnotesize$\pm$0.51}} & - & -0.61\textsubscript{{\footnotesize$\pm$0.51}}\\ \hline

\multicolumn{1}{|c|}{\multirow{7}{*}{Utilitarian}} & PropFair: $M=3$ & 91.74\textsubscript{{\footnotesize$\pm$1.59}} & 89.13\textsubscript{{\footnotesize$\pm$1.05}} & 94.10\textsubscript{{\footnotesize$\pm$0.26}} & {\color{gray}2.62\textsubscript{{\footnotesize$\pm$0.53}}} & {\color{gray}2.30\textsubscript{{\footnotesize$\pm$1.53}}} & {\color{gray}-0.79\textsubscript{{\footnotesize$\pm$0.26}}}& 98.00\textsubscript{{\footnotesize$\pm$0.62}} & 90.83\textsubscript{{\footnotesize$\pm$2.95}} & 97.20\textsubscript{{\footnotesize$\pm$1.28}} & {\color{gray}3.07\textsubscript{{\footnotesize$\pm$1.03}}}  & {\color{gray}-2.28\textsubscript{{\footnotesize$\pm$3.33}}} & 
{\color{gray}-0.82\textsubscript{{\footnotesize$\pm$0.14}}}\\

\multicolumn{1}{|c|}{} & \begin{tabular}[c]{@{}c@{}}$q$-FedAvg: $q$=0\end{tabular} & 91.24\textsubscript{{\footnotesize$\pm$1.45}} & 88.60\textsubscript{{\footnotesize$\pm$1.71}} & 93.37\textsubscript{{\footnotesize$\pm$0.40}} & {\color{gray}2.20\textsubscript{{\footnotesize$\pm$0.62}}} & {\color{gray}1.99\textsubscript{{\footnotesize$\pm$1.53}}} & {\color{gray}-0.79\textsubscript{{\footnotesize$\pm$0.22}}} & \underline{98.57\textsubscript{{\footnotesize$\pm$0.60}}} & 98.07\textsubscript{{\footnotesize$\pm$0.87}} & 97.90\textsubscript{{\footnotesize$\pm$0.69}} & {\color{gray}0.56\textsubscript{{\footnotesize$\pm$0.29}}} & {\color{gray}3.18\textsubscript{{\footnotesize$\pm$0.58}}} & 
{\color{gray}-0.56\textsubscript{{\footnotesize$\pm$0.21}}}\\

\multicolumn{1}{|c|}{} & \begin{tabular}[c]{@{}c@{}}$q$-FedAvg: $q$=0.1\end{tabular} & \underline{93.31\textsubscript{{\footnotesize$\pm$1.27}}} & 88.37\textsubscript{{\footnotesize$\pm$1.96}} & 94.57\textsubscript{{\footnotesize$\pm$0.75}} & {\color{gray}3.23\textsubscript{{\footnotesize$\pm$0.76}}} & {\color{gray}1.48\textsubscript{{\footnotesize$\pm$1.97}}} &
{\color{gray}-0.81\textsubscript{{\footnotesize$\pm$0.17}}}& 98.43\textsubscript{{\footnotesize$\pm$0.40}} & 97.10\textsubscript{{\footnotesize$\pm$1.93}} & 98.07\textsubscript{{\footnotesize$\pm$0.87}} & {\color{gray}0.94\textsubscript{{\footnotesize$\pm$0.62}}} & {\color{gray}2.28\textsubscript{{\footnotesize$\pm$1.66}}} & 
{\color{gray}-0.28\textsubscript{{\footnotesize$\pm$0.27}}}\\

\multicolumn{1}{|c|}{} & \begin{tabular}[c]{@{}c@{}}UDJ-FL: $\beta=0, \gamma=0$\end{tabular} & 91.08\textsubscript{{\footnotesize$\pm$1.86}} & 88.23\textsubscript{{\footnotesize$\pm$2.59}} & 93.10\textsubscript{{\footnotesize$\pm$0.26}} & {\color{gray}2.43\textsubscript{{\footnotesize$\pm$1.00}}} & {\color{gray}1.62\textsubscript{{\footnotesize$\pm$2.86}}} & 
{\color{gray}-0.84\textsubscript{{\footnotesize$\pm$0.16}}}& 98.44\textsubscript{{\footnotesize$\pm$0.86}} & 97.37\textsubscript{{\footnotesize$\pm$2.06}} & 97.80\textsubscript{{\footnotesize$\pm$2.42}} & {\color{gray}1.01\textsubscript{{\footnotesize$\pm$0.88}}} & {\color{gray}2.48\textsubscript{{\footnotesize$\pm$1.22}}} &
{\color{gray}-0.33\textsubscript{{\footnotesize$\pm$0.53}}}\\ 

\multicolumn{1}{|c|}{} & \begin{tabular}[c]{@{}c@{}}UDJ-FL: $\beta=0.1, \gamma=0$\end{tabular} & 93.44\textsubscript{{\footnotesize$\pm$0.95}} & 88.07\textsubscript{{\footnotesize$\pm$2.16}} & 95.17\textsubscript{{\footnotesize$\pm$0.55}} & {\color{gray}4.11\textsubscript{{\footnotesize$\pm$1.18}}} & {\color{gray}0.24\textsubscript{{\footnotesize$\pm$3.69}}} & 
{\color{gray}-0.78\textsubscript{{\footnotesize$\pm$0.05}}}& 98.39\textsubscript{{\footnotesize$\pm$0.75}} & 97.23\textsubscript{{\footnotesize$\pm$1.72}} & 97.40\textsubscript{{\footnotesize$\pm$2.78}} & {\color{gray}1.33\textsubscript{{\footnotesize$\pm$0.72}}} & {\color{gray}2.65\textsubscript{{\footnotesize$\pm$1.33}}} &
{\color{gray}-0.25\textsubscript{{\footnotesize$\pm$0.49}}}\\ 

\multicolumn{1}{|c|}{} & \begin{tabular}[c]{@{}c@{}}\textbf{UDJ-FL}: $\beta=0, \gamma=-1$\end{tabular} & 92.97\textsubscript{{\footnotesize$\pm$1.43}} & 84.63\textsubscript{{\footnotesize$\pm$3.58}} & 94.83\textsubscript{{\footnotesize$\pm$0.06}} & {\color{gray}5.35\textsubscript{{\footnotesize$\pm$0.74}}} & {\color{gray}-1.47\textsubscript{{\footnotesize$\pm$4.59}}} &
{\color{gray}-0.83\textsubscript{{\footnotesize$\pm$0.24}}}& \textbf{98.61\textsubscript{{\footnotesize$\pm$0.54}}} & 97.76\textsubscript{{\footnotesize$\pm$0.92}} & 97.67\textsubscript{{\footnotesize$\pm$2.32}} & {\color{gray}0.98\textsubscript{{\footnotesize$\pm$0.66}}} & {\color{gray}2.94\textsubscript{{\footnotesize$\pm$0.28}}} & 
{\color{gray}-0.20\textsubscript{{\footnotesize$\pm$0.67}}}\\

\multicolumn{1}{|c|}{} & \begin{tabular}[c]{@{}c@{}}\textbf{UDJ-FL}: $\beta=0.1, \gamma=-1$\end{tabular} & \textbf{94.26\textsubscript{{\footnotesize$\pm$1.40}}} & 82.07\textsubscript{{\footnotesize$\pm$3.98}} & 96.17\textsubscript{{\footnotesize$\pm$0.12}} & {\color{gray}6.81\textsubscript{{\footnotesize$\pm$1.70}}} & {\color{gray}-4.98\textsubscript{{\footnotesize$\pm$4.27}}} & 
{\color{gray}-0.95\textsubscript{{\footnotesize$\pm$0.06}}}& 98.39\textsubscript{{\footnotesize$\pm$0.48}} & 96.27\textsubscript{{\footnotesize$\pm$0.75}} & 98.60\textsubscript{{\footnotesize$\pm$0.52}} & {\color{gray}1.41\textsubscript{{\footnotesize$\pm$0.00}}} & {\color{gray}1.73\textsubscript{{\footnotesize$\pm$0.76}}} &
{\color{gray}-0.83\textsubscript{{\footnotesize$\pm$0.02}}}\\ \hline

\multicolumn{1}{|c|}{\multirow{4}{*}{Egalitarian}} & TERM & 91.74\textsubscript{{\footnotesize$\pm$1.87}} & 88.13\textsubscript{{\footnotesize$\pm$2.54}}& 93.90\textsubscript{{\footnotesize$\pm$0.52}} & 2.64\textsubscript{{\footnotesize$\pm$0.73}} & {\color{gray}1.08\textsubscript{{\footnotesize$\pm$2.52}}} &
{\color{gray}-0.87\textsubscript{{\footnotesize$\pm$0.13}}}& 97.83\textsubscript{{\footnotesize$\pm$0.15}} & 93.20\textsubscript{{\footnotesize$\pm$0.89}} & 97.23\textsubscript{{\footnotesize$\pm$2.11}} & 1.96\textsubscript{{\footnotesize$\pm$0.16}} & {\color{gray}-0.40\textsubscript{{\footnotesize$\pm$0.75}}} &
{\color{gray}-0.68\textsubscript{{\footnotesize$\pm$0.41}}}\\

\multicolumn{1}{|c|}{} & FedMGDA+ & 91.51\textsubscript{{\footnotesize$\pm$1.58}} & 88.07\textsubscript{{\footnotesize$\pm$1.54}} & 94.30\textsubscript{{\footnotesize$\pm$0.87}} & 2.88\textsubscript{{\footnotesize$\pm$0.37}} & {\color{gray}1.14\textsubscript{{\footnotesize$\pm$1.56}}} & 
{\color{gray}-0.86\textsubscript{{\footnotesize$\pm$0.14}}}
&97.73\textsubscript{{\footnotesize$\pm$0.06}} & 93.63\textsubscript{{\footnotesize$\pm$1.46}} & 97.10\textsubscript{{\footnotesize$\pm$2.33}} & 1.84\textsubscript{{\footnotesize$\pm$0.51}} & {\color{gray}0.00\textsubscript{{\footnotesize$\pm$1.56}}} & 
{\color{gray}-0.61\textsubscript{{\footnotesize$\pm$0.51}}}\\

\multicolumn{1}{|c|}{} & \begin{tabular}[c]{@{}c@{}}UDJ-FL: $\beta=1, \gamma=0$\end{tabular} & 91.08\textsubscript{{\footnotesize$\pm$1.86}} & 88.23\textsubscript{{\footnotesize$\pm$2.59}} & 93.10\textsubscript{{\footnotesize$\pm$0.26}} & \underline{2.43\textsubscript{{\footnotesize$\pm$1.00}} }& {\color{gray}1.62\textsubscript{{\footnotesize$\pm$2.86}}} & 
{\color{gray}-0.84\textsubscript{{\footnotesize$\pm$0.16}}}& 98.44\textsubscript{{\footnotesize$\pm$0.86}} & 97.37\textsubscript{{\footnotesize$\pm$2.06}} & 97.80\textsubscript{{\footnotesize$\pm$2.42}} & \underline{1.01\textsubscript{{\footnotesize$\pm$0.88}}} & {\color{gray}2.48\textsubscript{{\footnotesize$\pm$1.22}}} &
{\color{gray}-0.33\textsubscript{{\footnotesize$\pm$0.53}}}\\ 

\multicolumn{1}{|c|}{} & \begin{tabular}[c]{@{}c@{}}\textbf{UDJ-FL}: $\beta=1, \gamma=1$\end{tabular} & 92.37\textsubscript{{\footnotesize$\pm$0.91}} & 90.10\textsubscript{{\footnotesize$\pm$0.10}} & 93.70\textsubscript{{\footnotesize$\pm$0.95}} & \textbf{2.21\textsubscript{{\footnotesize$\pm$0.28}}}& {\color{gray}3.81\textsubscript{{\footnotesize$\pm$0.29}}} & 
{\color{gray}-0.61\textsubscript{{\footnotesize$\pm$0.22}}}& 98.00\textsubscript{{\footnotesize$\pm$0.83}} & 97.93\textsubscript{{\footnotesize$\pm$1.84}} & 97.63\textsubscript{{\footnotesize$\pm$1.99}} & \textbf{0.94\textsubscript{{\footnotesize$\pm$0.45}}} & {\color{gray}3.15\textsubscript{{\footnotesize$\pm$1.22}}} &
{\color{gray}0.31\textsubscript{{\footnotesize$\pm$0.42}}}\\ \hline

\multicolumn{1}{|c|}{\multirow{5}{*}{\begin{tabular}[c]{@{}c@{}}Rawls'\\ Difference\\ Principle\end{tabular}}} & PropFair: $M=2$ & 90.51\textsubscript{{\footnotesize$\pm$1.47}} & 88.17\textsubscript{{\footnotesize$\pm$1.02}} & 92.57\textsubscript{{\footnotesize$\pm$0.86}} & {\color{gray}2.62\textsubscript{{\footnotesize$\pm$0.53}}} & 2.30\textsubscript{{\footnotesize$\pm$1.53}} & 
{\color{gray}-0.82\textsubscript{{\footnotesize$\pm$0.21}}}& 97.93\textsubscript{{\footnotesize$\pm$1.10}} & 92.36\textsubscript{{\footnotesize$\pm$2.94}} & 96.80\textsubscript{{\footnotesize$\pm$1.95}} & {\color{gray}2.25\textsubscript{{\footnotesize$\pm$1.38}}}  & -0.72\textsubscript{{\footnotesize$\pm$4.19}} & 
{\color{gray}-0.56\textsubscript{{\footnotesize$\pm$0.21}}}\\

\multicolumn{1}{|c|}{} & AFL & 91.04\textsubscript{{\footnotesize$\pm$1.44}} & 91.20\textsubscript{{\footnotesize$\pm$0.78}} & 91.27\textsubscript{{\footnotesize$\pm$0.97}} & {\color{gray}2.44\textsubscript{{\footnotesize$\pm$0.88}}} & \textbf{7.07\textsubscript{{\footnotesize$\pm$1.03}}} & 
{\color{gray}-0.09\textsubscript{{\footnotesize$\pm$0.23}}}& 96.97\textsubscript{{\footnotesize$\pm$0.75}} & 95.70\textsubscript{{\footnotesize$\pm$0.52}} & 95.03\textsubscript{{\footnotesize$\pm$1.25}} & {\color{gray}1.15\textsubscript{{\footnotesize$\pm$0.29}}} & 3.41\textsubscript{{\footnotesize$\pm$1.03}}  & 
{\color{gray}0.48\textsubscript{{\footnotesize$\pm$0.38}}}\\


\multicolumn{1}{|c|}{} & \begin{tabular}[c]{@{}c@{}}$q$-FedAvg: $q$=5\end{tabular} & 91.32\textsubscript{{\footnotesize$\pm$1.05}} & 85.80\textsubscript{{\footnotesize$\pm$0.78}} & 90.70\textsubscript{{\footnotesize$\pm$1.37}} & {\color{gray}3.38\textsubscript{{\footnotesize$\pm$0.84}}} & 4.68\textsubscript{{\footnotesize$\pm$0.72}} & 
{\color{gray}-0.45\textsubscript{{\footnotesize$\pm$0.10}}}& 87.10\textsubscript{{\footnotesize$\pm$2.75}} & 86.93\textsubscript{{\footnotesize$\pm$1.66}} & 77.23\textsubscript{{\footnotesize$\pm$4.45}} & {\color{gray}4.40\textsubscript{{\footnotesize$\pm$1.31}}} & 6.50\textsubscript{{\footnotesize$\pm$2.94}} &
{\color{gray}0.67\textsubscript{{\footnotesize$\pm$0.10}}}\\


\multicolumn{1}{|c|}{} & \begin{tabular}[c]{@{}c@{}}UDJ-FL: $\beta=5, \gamma=0$\end{tabular} & 91.41\textsubscript{{\footnotesize$\pm$0.88}} & 85.87\textsubscript{{\footnotesize$\pm$0.31}} & 91.10\textsubscript{{\footnotesize$\pm$1.40}} & {\color{gray}3.63\textsubscript{{\footnotesize$\pm$0.88}}} & 4.46\textsubscript{{\footnotesize$\pm$1.05}} & 
{\color{gray}-0.51\textsubscript{{\footnotesize$\pm$0.23}}}& 86.61\textsubscript{{\footnotesize$\pm$2.61}} & 87.37\textsubscript{{\footnotesize$\pm$2.82}} & 77.93\textsubscript{{\footnotesize$\pm$2.89}} & {\color{gray}4.11\textsubscript{{\footnotesize$\pm$1.37}}} & \underline{7.20\textsubscript{{\footnotesize$\pm$4.84}}} &  
{\color{gray}0.73\textsubscript{{\footnotesize$\pm$0.13}}}\\


\multicolumn{1}{|c|}{} & \begin{tabular}[c]{@{}c@{}}\textbf{UDJ-FL}: $\beta=5, \gamma=1$\end{tabular} & 90.42\textsubscript{{\footnotesize$\pm$1.23}} & 86.60\textsubscript{{\footnotesize$\pm$0.70}} & 90.00\textsubscript{{\footnotesize$\pm$1.37}} & {\color{gray}3.55\textsubscript{{\footnotesize$\pm$0.85}}} & \underline{6.04\textsubscript{{\footnotesize$\pm$1.72}}}& 
{\color{gray}-0.23\textsubscript{{\footnotesize$\pm$0.17}}}
& 85.33\textsubscript{{\footnotesize$\pm$2.68}} & 87.50\textsubscript{{\footnotesize$\pm$2.98}} & 75.13\textsubscript{{\footnotesize$\pm$3.15}} & {\color{gray}5.40\textsubscript{{\footnotesize$\pm$1.02}}} & \textbf{8.29\textsubscript{{\footnotesize$\pm$4.78}}} & 
{\color{gray}0.79\textsubscript{{\footnotesize$\pm$0.06}}}\\ \hline

\multicolumn{1}{|c|}{\multirow{3}{*}{Desert}} & CFFL & 87.92\textsubscript{{\footnotesize$\pm$1.28}} & 84.97\textsubscript{{\footnotesize$\pm$0.97}} & 95.61\textsubscript{{\footnotesize$\pm$0.63}} & {\color{gray}6.12\textsubscript{{\footnotesize$\pm$2.02}}} & {\color{gray}0.79\textsubscript{{\footnotesize$\pm$0.58}}} & -0.83\textsubscript{{\footnotesize$\pm$0.58}} & 93.75\textsubscript{{\footnotesize$\pm$1.18}} & 92.92\textsubscript{{\footnotesize$\pm$1.25}}& 96.67\textsubscript{{\footnotesize$\pm$2.32}}& {\color{gray}3.19\textsubscript{{\footnotesize$\pm$0.92}}}& {\color{gray}3.43\textsubscript{{\footnotesize$\pm$1.68}}} & -0.32\textsubscript{{\footnotesize$\pm$0.67}} \\

\multicolumn{1}{|c|}{} & \begin{tabular}[c]{@{}c@{}} UDJ-FL: $\beta=\bm{\beta}$, $\gamma=-1$\end{tabular} & 83.11\textsubscript{{\footnotesize$\pm$9.01}} & 58.60\textsubscript{{\footnotesize$\pm$5.31}} & 96.53\textsubscript{{\footnotesize$\pm$1.00}} & {\color{gray}16.32\textsubscript{{\footnotesize$\pm$0.73}}} & {\color{gray}-12.49\textsubscript{{\footnotesize$\pm$1.64}}} & \underline{-0.88\textsubscript{{\footnotesize$\pm$0.08}}}& 86.56\textsubscript{{\footnotesize$\pm$5.80}}& 41.23\textsubscript{{\footnotesize$\pm$4.21}} & 98.90\textsubscript{{\footnotesize$\pm$0.52}} & {\color{gray}23.76\textsubscript{{\footnotesize$\pm$1.49}}} & {\color{gray}-28.59\textsubscript{{\footnotesize$\pm$8.64}}} &  \underline{-0.94\textsubscript{{\footnotesize$\pm$0.03}}}  \\ 

\multicolumn{1}{|c|}{} & \begin{tabular}[c]{@{}c@{}}\textbf{UDJ-FL}: $\beta=\bm{\beta}$, $\gamma=0$\end{tabular} & 88.21\textsubscript{{\footnotesize$\pm$4.36}} & 64.70\textsubscript{{\footnotesize$\pm$3.55}} & 95.60\textsubscript{{\footnotesize$\pm$0.82}} & {\color{gray}13.30\textsubscript{{\footnotesize$\pm$0.31}}} & {\color{gray}-11.98\textsubscript{{\footnotesize$\pm$3.06}}} & \textbf{-0.93\textsubscript{{\footnotesize$\pm$0.03}}}& 89.44\textsubscript{{\footnotesize$\pm$4.17}}& 46.27\textsubscript{{\footnotesize$\pm$6.65}} & 98.33\textsubscript{{\footnotesize$\pm$0.45}} & {\color{gray}22.16\textsubscript{{\footnotesize$\pm$1.07}}} & {\color{gray}-28.00\textsubscript{{\footnotesize$\pm$7.31}}} &  \textbf{-0.97\textsubscript{{\footnotesize$\pm$0.02}}}  \\ \hline

\end{tabular}%
}
\end{table}

\subsection{Results}
We lists the results of all experiments for both DirtyMNIST and CURE-TSR in Table \ref{tab:results}. The best overall results per section are listed in bold while the second best are underlined. Results written in gray are not relevant to the particular fairness type being analyzed, but are listed simply for completeness and comparison with other fairness types. 

\textbf{Utilitarian:} For utilitarian fairness, we are most concerned with obtaining the highest overall global model accuracy (first column in Table \ref{tab:results}, marked Global Acc). For both DirtyMNIST and CURE-TSR, UDJ-FL was able to obtain the highest overall global model accuracy with $\beta\to0$ and $\gamma=-1$ as hypothesized in Section \ref{sec:method} to achieve utilitarian fairness. However, for DirtyMNIST, UDJ-FL obtains higher accuracy with $\beta=0.1$ than $\beta=0$. We believe this is caused by $\beta=0$ (combined with $\gamma=-1$) not giving the clients with high aleatoric uncertainty enough attention by the global model and therefore not properly learning from their data distributions while $\beta=0.1$ does. To find the best overall utility using UDJ-FL (and specifically Eq. \ref{eq:util-opt}), we suggest testing several values in the range of $\beta\in(0, 1)$. 

\textbf{Egalitarian:} We compare egalitarian fairness methods along their achieved standard deviation of the clients' test accuracies (column labeled STD in Table \ref{tab:results}). For both DirtyMNIST and CURE-TSR, UDJ-FL achieved the lowest standard deviation among the clients' final test accuracies with $\beta=1$ and $\gamma=1$ which validates that this hyperparameter setting allows UDJ-FL to achieve egalitarian fairness. Additionally, we note that while the objective function for Rawls' DP and egalitarian fairness are similar, the standard deviations obtained by the Rawls' DP fairness methods are higher than those obtained by UDJ-FL with $\beta=1$ and $\gamma=1$ further supporting the claim that Rawls' DP approaches, while focusing attention on the least advantaged user, does not inherently cause the standard deviation among the client accuracies to be smaller. 

\textbf{Rawls' Difference Principle:} To analyze the effectiveness of UDJ-FL and the other Rawls' DP fairness methods, we look at the absolute accuracy difference between the least advantaged client and average difference of all other clients (column titled $\Psi$ Eq. \ref{eq:Psi}). For DirtyMNIST, UDJ-FL with $r=1+\frac{1}{\beta}, \beta\to\infty$, and $\gamma=1$ achieve the second highest overall $\Psi$ score, while for CURE-TSR achieved the highest, meaning that for both datasets, UDJ-FL training benefited the least advantaged client the most which meets the requirement for Rawls' difference principle fairness. 

\textbf{Desert:} In desert-based fairness, we are most concerned with the final accuracy distribution along the clients local datasets being inversely proportional with the distribution of the clients' aleatoric uncertainty values. (column labeled $r_{\bm{\upsilon},\bm{A}}$ Eq. \ref{eq:pearson} in Table \ref{tab:results}). For both DirtyMNIST and CURE-TSR, our UDJ-FL method with $r=1$, $\beta<0$, and $\gamma=0$ achieve the first and second best results (most negative $r_{\bm{\upsilon},\bm{A}}$ value) when compared against the other desert fair methods meaning that UDJ-FL successfully achieves desert fairness. 

\subsection{Limitations} While UDJ-FL is able to achieve all four notions of distributive justice fairness through simple hyperparameter selection, it does have a few limitations. First, Lemma \ref{lem} provides an upper bound on the local Lipschitz constant for the client-level loss gradient. However, in certain cases, this approximation can lead to a model that does not achieve the best fairness-accuracy tradeoff. Second, while we have left it up to the practitioner to best choose which fairness notion to use in various scenarios, we acknowledge that improperly selecting the fairness metric can lead to issues. For instance, using utilitarian fairness even in the presence of severe across-client data quality disparities would result in a global model that only works well for a small subset of the clients. Finally, we make the assumption that aleatoric uncertainty accurately captures client contribution and advantage. However, this assumption may not hold in all cases (e.g., highly noisy or adversarial data distributions).

\section{Conclusion}
\label{sec:conclusion}
In this work, we presented Uncertainty-based Distributive Justice for Federated Learning (UDJ-FL) which is a flexible federated learning framework that can achieve four main theories of distributive justice -- egalitarian, utilitarian, Rawls' difference principle, and desert -- by utilizing techniques from fair resource allocation in conjunction with using the aleatoric uncertainty score of the clients to define who is the least advantaged and/or who contributes the most to the overall federated learning process. We empirically show that our UDJ-FL method is able to successfully achieve all four definitions of fairness through hyperparameter selection which makes it simple for practitioners to change the implemented fairness guarantee without having to make heavy architectural changes. In the future, we plan to extend UDJ-FL into more complex federated settings such as cross-device where clients can be unreliable, and consider more privacy preserving settings such as implementing a differentially private version of UDJ-FL to further protect the clients' local data privacy. 

\bibliographystyle{ACM-Reference-Format}
\bibliography{ref}

\appendix
\section{Motivating Example}
To further motivate our reasoning behind using uncertainty quantification, specifically aleatoric uncertainty, based reweighing we offer the following example. In normal federated averaging (FedAvg), the weighing strategy gives more importance to clients that have more data points. But it could be the case that the client with the most amount of data points has noisy or bad quality data. In this setting, it is likely that the global model will over-fit to this client’s dirty data the federated model's performance will suffer. To show this empirically, we trained a simple federated learning model using the Dirty-MNIST dataset (see Section \ref{sec:data} for a description) and 5 clients in three different settings: 1) all clients had mostly clean data with only a small amount of dirty data; 2) four of the clients had mostly dirty data while the remaining client had an overwhelming number of clean data points; and 3) four of the clients has mostly clean data while the remaining had an overwhelming number of dirty data points. We report the global model accuracy, along with each client's local dataset accuracy in Table \ref{tab:even-clean-large}. Here, due to using different data splits, we cannot necessarily compare the client accuracies between each setting. However, we can compare each client's accuracy against the other clients within each setting as well as compare the global model performance. For instance, in the first setting (first row of Table \ref{tab:even-clean-large} labeled `Even'), the average accuracy along all the clients is fairly stable and the global model accuracy is acceptable\footnote{We note that the global model accuracy is less than any of the client's local performance, which may seem odd. However, we chose to test on the full default test set provided with MNIST rather than on the combination of each client's test sets (which include both clean MNIST and dirty Ambiguous-MNIST images) to test generalizability of the global model to unseen clean test points.}. In the second setting (second row of Table \ref{tab:even-clean-large} labeled `Clean'), the accuracy of the first client -- who had the overwhelming amount of clean data -- had performance greater than any of the other four clients who had mostly dirty data. However, the global model performance increased from the previous setting (even though the amount of dirty data present in the overall system increased) since the client with the large amount of clean data was given more weight during model averaging. Finally, in the third setting (third row of Table \ref{tab:even-clean-large} labeled `Dirty') while the fifth client, who had an overwhelming amount of dirty data, did not achieve the highest performance on their local data distribution (which is reasonable due to having mostly dirty data while the other clients had mostly clean data), the global model achieved lower performance than the other two settings, even though the amount of clean data present in the system was similar to the second setting. This was due to the model giving more importance to the fifth client who had primarily dirty data. 
\begin{table}[h!]
\centering
\caption{Client accuracy under different data distribution settings in FedAvg.}
\label{tab:even-clean-large}
\begin{tabular}{@{}cccccccc@{}}
\cmidrule(l){2-8}
 & \multicolumn{6}{c}{Accuracy} & \multirow{2}{*}{\begin{tabular}[c]{@{}c@{}}Client \\STD\end{tabular}} \\ \cmidrule(lr){2-7}
 & 1 & 2 & 3 & 4 & 5 & Global & \\ \midrule
Even & 94.56 & 91.50 & 94.47 & 95.2 & 94.87 & 87.74 & 1.49\\
Clean & 94.50 & 82.40 & 88.67 & 85.43 & 79.10 & 89.83 & 5.92 \\
Dirty & 88.33 & 85.13 & 92.33 & 92.33 & 88.13 & 83.55 & 3.06\\ \bottomrule
\end{tabular}
\end{table}

\section{Definitions for Distributive Justice}
\label{app:dj-definitions}
\begin{table}[h!]
\centering
\caption{Four common distributive justice principles, their description, and example as it relates to federated learning.}
\label{tab:def-and-examples}
\resizebox{.99\textwidth}{!}{%
\begin{tabular}{@{}ccc@{}}
\toprule
\textbf{Type} & \textbf{Description} & \textbf{FL Example} \\ \midrule
Egalitarian & \mline{Every person should receive and equal amount of material goods \& services.} & \mline{Clients achieve similar final model performance on their local data distributions.}  \\ \midrule
Desert & \mline{Everyone should be allocated material goods and services based on the value of their contribution/effort/cost.} & \mline{Clients receive final model performance on their local data distribution relative to the quality of their data.} \\ \midrule
\mline{Rawls' Difference Principle} & \mline{Only material difference allowed are those that raise the level of the least advantaged. Concerned with absolute increase, not relative, and not concerned with maximizing utility.} & \mline{Maximize the performance of the worst performing client/client with lowest quality data.} \\ \midrule
Utilitarianism & \mline{Distribute material goods \& services such that overall well-being is maximized.} & \mline{Maximize the average performance, e.g., standard FedAvg.} \\ \bottomrule
\end{tabular}%
}
\end{table}
Here, we provide definitions for the four distributive justice theories supported by UDJ-FL. We note that other distributive justice theories exist beyond the four discussed here. However, they are difficult to realize in the federated setting. For example, in luck-egalitarian equality of opportunity there is no clear way to measure ambitions  (our choices and what results from them, such as the choice to work hard), or endowments (results of brute luck, or those things over which we have no control, such as one's genetic inheritance), and there is no direct way to disentangle them in the federated learning scenario. Additionally, we do not discuss libertarianism in this work as there is no clear mapping of libertarianism (which defines the concept of just acquisition and exchanging of material goods and services compared to the simple distribution of them) to the goal of federated learning -- which is to maximize the utility of the clients. We leave a fair federated learning framework that covers these types of distributive justice for future work.

\subsection{Strict Egalitarianism} Often thought of as the simplest form of distributive justice, strict egalitarianism can be defined as follows:
\begin{definition}[Strict Egalitarianism \cite{sep-justice-distributive}]
    All persons in a society should have the same level of material goods and services as all humans are morally equal. 
\end{definition}
The strict egalitarian principle is most commonly justified on the grounds that people are morally equal and that equality in material goods and services is the best way to give effect to this moral ideal. While strict egalitarianism is simple to define, there are many critiques of the theory. For instance, the definition of strict egalitarianism implies that even if an unequal distribution would make everyone in the society better off, or if an unequal distribution would make some in society better off with no one being worse off, the strict egalitarian principle of equal distribution should still be maintained \cite{sep-justice-distributive}. Further, in federated learning, it is infeasible to aim for \textit{strict} egalitarianism (where all client achieve the exact same final performance) due to the probabilistic nature of machine learning and therefore we relax strict egalitarianism to simple egalitarianism in this work.

\subsection{Utilitarianism} Utilitarianism is a welfare-based principle and is primarily concerned with the overall level of ``welfare'' of a population where welfare is often defined in term of pleasure, happiness, or preference-satisfaction \cite{sep-justice-distributive}. It can be defined as follows:
\begin{definition}[Utilitarianism]
    The distribution of material goods and services should maximize the expected utility of pleasure/happiness/preference-satisfaction within the population. 
\end{definition}
Here, preference-satisfaction refers to individual persons in the society obtaining what they desire. 
Utilitarianism is often criticized since it does not take into consideration the distinctness of individuals within a society. 

\subsection{Rawls' Difference Principle} First proposed by John Rawls in \cite{rawls1971atheory}, Rawls' difference principle can be stated as follows: 
\begin{definition}[Rawls' Difference Principle \cite{sep-justice-distributive}]
    Social and economic inequalities are to be to the greatest benefit of the least advantaged members of society.
\end{definition}
In other words, the only allowable difference in material distribution are those that help the least advantaged in society. Rawls' difference principle is one of the most discussed, and therefore most criticized, distributive justice theories over the past four decades and we refer interested readers to \cite{sep-justice-distributive} for an in-depth discussion.

\subsection{Desert}
Desert is a normative concept that occurs often in our daily life. E.g., ``\textit{you get what you deserve}''. There are many thought principles for desert fairness which differ according to how ``deserving'' is defined. In this work, we use the following definition of desert fairness:
\begin{definition}[Desert \cite{sep-justice-distributive}]
    People should be rewarded for their work activity according to the \underline{value} of their contribution to the social product.
\end{definition}
In other words, people get rewards commensurate to the amount of quality work they contribute. Those that contribute more quality work to a process (e.g., contribute more high quality data), should receive better rewards than those who contribute low quality work or those who are free-riders.

\section{Choosing Aleatoric Uncertainty as Client Weights}
\label{sec:irreducible-bayes}
Here, we provide intuition for why we choose to use aleatoric uncertainty as our client weights (rather than using epistemic or predictive uncertainty) in our UDJ-FL framework. One main reason we choose to use aleatoric uncertainty is that it is irreducible while epistemic uncertainty and predictive uncertainty can be improved with training. When choosing how to define ``least advantaged client'' or ``contribution'' we want to use an easy to calculate, steady, value that remains the same throughout training. More specifically, given a loss function $\ell(y, \hat{y})$, the \textbf{point-wise risk} (or \textit{expected loss}) of a predictor $f$ at $\bm{x}\in\mathcal{X}$ is defined as 
\begin{equation}
    \label{eq:point-wise-risk}
    R(f,\bm{x}) = \mathbb{E}_{\mathbb{P}(Y\mid X=\bm{x})}[\ell(Y, f(\bm{x})]
\end{equation}
Eq. \ref{eq:point-wise-risk} has a fundamental limiting lower bound, usually reached at a function $f^*$  called the \textbf{Bayes Predictor}:
\begin{equation}
    \label{eq:bayes-predictor}
    f^*(\bm{x}) = \underset{\hat{y}\in\mathcal{Y}}{\mathrm{argmin}}\mathbb{E}_{\mathbb{P}(Y\mid X=\bm{x})}[\ell(Y, \hat{y})]
\end{equation}
In \cite{lahlou2021deup}, the authors show that $R(f^*, \bm{x}) > 0$ indicates an irreducible risk due to the inherent randomness of $\mathbb{P}(Y\mid X=\bm{x})$ and therefore can be used as a measure of aleatoric uncertainty at $\bm{x}$. Therefore, a client in the federation that has higher average aleatoric uncertainty will have higher training loss due to having a higher fundamental limiting lower bound. Since FedAvg reweighs clients according to dataset size, if the largest client has high aleatoric uncertainty, then their model parameters are sub-optimal and will degrade the overall performance of the federated learning model as well as the performance of the model on other client's data distributions. In turn, if we use aleatoric uncertainty as our weights, we can selectively give clients with the least advantage (e.g., higher limiting lower bound) or the highest amount of contribution (e.g., lowest limiting bound) more attention. 

In addition to aleatoric uncertainty being irreducible, epistemic and predictive uncertainty require more advanced techniques to estimate such as Bayesian modeling, Monte- Carlo drop-out, or model ensembling \cite{abdar2021review}. Aleatoric uncertainty on the other hand can be simply calculated using outputs from the softmax layer of the model. Further, since all clients are ultimately training one model together, the epistemic uncertainty of each client should converge by the end of training, meaning it is not a useful measure of ``advantage'' or ``contribution''.

\section{Generalization Bounds of UDJ-FL}
UDJ-FL can be seen as a generalization of $q$-FFL in that UDJ-FL recovers $q$-FFL when $r=1+\frac{1}{\beta}$ and $\beta>0$ in addition to achieving the other three distributive justice fairness measures of egalitarianism, desert, and utilitarianism. $q$-FFL itself is a generalization of AFL \cite{mohri2019agnostic} as $q\to 0$. Similar to the generalization bounds provided in \cite{li2019fair}, we start from the AFL objective. 

\textit{AFL objective}: Suppose we have a federated setting in which we want to minimize the loss over the clients but the proper weights for each client are unknown:
\begin{equation}
\label{eq:optimal-afl}
    L_\lambda(h) = \sum_{i=1}^N\lambda_i\mathbb{E}_{(x,y)\sim D_i}[\ell(h(x), y)]
\end{equation}
where $\lambda\in\Lambda$, $\Lambda$ is a probability simplex, and $D_i$ is client $i$'s local dataset. Denote by $\hat{L}_\lambda(h)$ the empirical loss:
\begin{equation}
\label{eq:empirical-afl}
    \hat{L}_\lambda(h) = \sum_{i=1}^N\frac{\lambda_i}{n_i}\sum_{j=1}^{n_i}\ell(h(x_{i,j}), y_{i,j})
\end{equation}
In AFL \cite{mohri2019agnostic}, the goal is to derive $\lambda$ (the client weights) such that the most disadvantaged client receives the biggest benefit from federated training (e.g., Rawls' difference principle fairness). However, in UDJ-FL, we assume $\lambda$ is set to be the aleatoric uncertainty values of the clients, i.e., $\lambda=\bm{\upsilon}$. Therefore, we can rewrite $L_\lambda(h)$ (Eq. \ref{eq:optimal-afl}) and $\hat{L}_\lambda(h)$ (Eq. \ref{eq:empirical-afl}) as:
\begin{equation}
\label{eq:upsilon-optimal}
        L_{\bm{\upsilon}}(h) = \sum_{i=1}^N\upsilon_i\mathbb{E}_{(x,y)\sim D_i}[\ell(h(x), y)]
\end{equation}
\begin{equation}
\label{eq:upsilon-empirical}
            \hat{L}_{\bm{\upsilon}}(h) = \sum_{i=1}^N\frac{\upsilon_i}{n_i}\sum_{j=1}^{n_i}\ell(h(x_{i,j}), y_{i,j})
\end{equation}
We also consider a unweighted version of UDJ-FL:
\begin{equation}
    \label{eq:distjust-as-afl-optimal}
    L_{r\beta}(h) = \sum_{i=1}^N\left(\mathbb{E}_{(x,y)\sim D_i}[\ell(h(x), y)]\right)^{r\beta}
\end{equation}
and further, construct the empirical loss of the unweighted UDJ-FL similar to Eq. \ref{eq:upsilon-empirical} as:
\begin{equation}
    \label{eq:distjust-as-afl-empirical}
      \hat{L}_{r\beta}(h) =\sum_{i=1}^N\frac{1}{n_i}\left[\sum_{j=1}^{n_i}\ell(h(x_{i,j}), y_{i,j})\right]^{r\beta}
\end{equation}
Eq. \ref{eq:distjust-as-afl-empirical} can be written as:
\begin{equation}
    \label{eq:distjust-max}
    \Tilde{L}_{r\beta}(h) = \max_{\bm{\sigma}, ||\bm{\sigma}||_p\leq1}\sum_{i=1}^N\frac{\sigma_i}{n_i}\sum_{j=1}^{n_i}\ell(h(x_{i,j}), y_{i,j})
\end{equation}
where $\frac{1}{p} + \frac{1}{r\beta} = 1$, $(p\geq1, r\beta\geq0)$. Here, the goal is to obtain $\frac{\sigma_i}{n_i}\sum_{j=1}^{n_i}\ell(h(x_{i,j}), y_{i,j})$ as large as possible by choosing the proper $\bm{\sigma}$. By giving more weight (higher $\sigma_i$) to clients with higher losses we can simulate raising the client's losses to $r\beta$. 

\begin{lemma}[Generalization bounds of UDJ-FL for a specific $\lambda=\bm{\upsilon}$]
    Assume that the loss $l$ is bounded by $M > 0$ and the numbers of local samples are $(n_1, \dots, n_N)$. Then, for any $\delta > 0$, with probability at least $1-\delta$, the following holds for any $\bm{\upsilon}\in\Upsilon$, $h\in H$:
    \begin{equation}
        L_{\bm{\upsilon}}(h) \leq ||\bm{\upsilon}||_p\Tilde{L}_{r\beta}(h) + \mathbb{E}\left[\max_{h\in\mathcal{H}}L_{\bm{\upsilon}}(h) - \hat{L}_{\bm{\upsilon}}(h)\right]+ M\sqrt{\sum_{i=1}^N\frac{v_i^2}{2n_i}\log\frac{1}{\delta}}
    \end{equation}
\end{lemma}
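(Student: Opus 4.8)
The plan is to follow the template of the AFL and $q$-FFL generalization arguments (\cite{mohri2019agnostic,li2019fair}): decompose $L_{\bm{\upsilon}}(h)$ into an empirical piece that H\"older's inequality bounds by $\Tilde{L}_{r\beta}$, plus a uniform-deviation piece that the bounded-differences (McDiarmid) inequality concentrates around its mean. Writing $S=\{(x_{i,j},y_{i,j})\}_{i,j}$ for the full collection of local samples and
\begin{equation*}
    \Phi(S) := \max_{h\in\mathcal{H}}\bigl(L_{\bm{\upsilon}}(h)-\hat{L}_{\bm{\upsilon}}(h)\bigr),
\end{equation*}
the starting point is the trivial inequality $L_{\bm{\upsilon}}(h)\le \hat{L}_{\bm{\upsilon}}(h)+\Phi(S)$, which holds for every $h\in H$ simultaneously.

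For the empirical term I would use the dual form of $\Tilde{L}_{r\beta}$ in Eq.~\ref{eq:distjust-max}. From Eq.~\ref{eq:upsilon-empirical}, $\hat{L}_{\bm{\upsilon}}(h)=||\bm{\upsilon}||_p\sum_{i=1}^N\frac{\upsilon_i/||\bm{\upsilon}||_p}{n_i}\sum_{j=1}^{n_i}\ell(h(x_{i,j}),y_{i,j})$, and since the rescaled vector $\bm{\upsilon}/||\bm{\upsilon}||_p$ has unit $p$-norm (with $p$ the H\"older conjugate of $r\beta$, i.e.\ $1/p+1/(r\beta)=1$) it is feasible in the maximization defining $\Tilde{L}_{r\beta}(h)$; using $\ell\ge 0$ and $\upsilon_i\ge 0$ to keep all terms nonnegative, this gives $\hat{L}_{\bm{\upsilon}}(h)\le||\bm{\upsilon}||_p\,\Tilde{L}_{r\beta}(h)$, which is the first summand of the claimed bound.

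For the deviation term I would apply McDiarmid to $\Phi(S)$. Because $|\sup_h f-\sup_h g|\le\sup_h|f-g|$, replacing a single sample $(x_{i,j},y_{i,j})$ leaves the population loss $L_{\bm{\upsilon}}$ unchanged and perturbs $\hat{L}_{\bm{\upsilon}}$ by at most $\upsilon_i M/n_i$ (as $0\le\ell\le M$), so $\Phi$ satisfies the bounded-differences property with constants $c_{i,j}=\upsilon_i M/n_i$ and $\sum_{i,j}c_{i,j}^2=\sum_{i=1}^N n_i(\upsilon_i M/n_i)^2=M^2\sum_{i=1}^N\upsilon_i^2/n_i$. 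McDiarmid then yields $\mathbb{P}\bigl(\Phi(S)-\mathbb{E}\Phi(S)\ge t\bigr)\le\exp\bigl(-2t^2/(M^2\sum_i\upsilon_i^2/n_i)\bigr)$; setting the right-hand side equal to $\delta$ and solving for $t$ gives $t=M\sqrt{\sum_{i=1}^N\frac{\upsilon_i^2}{2n_i}\log\frac{1}{\delta}}$. Hence on an event of probability at least $1-\delta$ we have $\Phi(S)\le\mathbb{E}\bigl[\max_{h\in\mathcal{H}}(L_{\bm{\upsilon}}(h)-\hat{L}_{\bm{\upsilon}}(h))\bigr]+M\sqrt{\sum_i\frac{\upsilon_i^2}{2n_i}\log\frac{1}{\delta}}$, and combining with the first inequality proves the lemma.

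I expect the bounded-differences bookkeeping to be the only delicate step: one must check that the supremum over $h$ is $(\upsilon_i M/n_i)$-sensitive in the $(i,j)$-th coordinate --- not, say, $M$ or $M/n_i$ --- which relies on $\bm{\upsilon}$ being a \emph{fixed} weight vector chosen independently of $S$, consistent with the lemma being stated ``for a specific $\lambda=\bm{\upsilon}$.'' If instead one wanted uniformity over all $\bm{\upsilon}\in\Upsilon$, a covering or union-bound argument over $\Upsilon$ would be needed, as in the AFL analysis. The H\"older step and the inversion of the McDiarmid tail are routine, and no Rademacher-complexity estimate is needed here since the uniform-over-$h$ content is retained inside the expectation term that already appears in the statement.
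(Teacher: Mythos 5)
Your proposal is correct and follows essentially the same route as the paper: bound $L_{\bm{\upsilon}}(h)$ by $\hat{L}_{\bm{\upsilon}}(h)$ plus the uniform deviation term with the $M\sqrt{\sum_i \upsilon_i^2/(2n_i)\log(1/\delta)}$ tail, then bound $\hat{L}_{\bm{\upsilon}}(h)\le \lVert\bm{\upsilon}\rVert_p\,\Tilde{L}_{r\beta}(h)$ via H\"older with $1/p+1/(r\beta)=1$ (your feasibility-in-the-max argument is exactly the dual form of that step). The only difference is that the paper imports the concentration inequality wholesale from the AFL analysis of Mohri et al., whereas you rederive it via McDiarmid for a fixed $\bm{\upsilon}$ --- a reasonable filling-in that is consistent with the lemma's ``specific $\lambda=\bm{\upsilon}$'' qualifier.
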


\begin{proof}
    As shown in \cite{mohri2019agnostic}, for any $\delta>0$, the following inequality holds with probability at least $1-\delta$ for any $\bm{\upsilon}\in\Upsilon$, $h\in\mathcal{H}$:
    \begin{equation}
    \label{eq:proof}
        L_{\bm{\upsilon}}(h) \leq \hat{L}_{\bm{\upsilon}}(h)+ \mathbb{E}\left[\max_{h\in\mathcal{H}}L_{\bm{\upsilon}}(h) - \hat{L}_{\bm{\upsilon}}(h)\right]+ M\sqrt{\sum_{i=1}^N\frac{v_i^2}{2n_i}\log\frac{1}{\delta}}
    \end{equation}
    Denote the empirical loss on device $i$ as $\frac{1}{n_i}\sum_{j=1}^{n_i}\ell(h(x_{i,j}), y_{i,j})$ as $F_i$. From H{\"o}lder's inequality, we have:
    \begin{equation}
        \hat{L}_{\bm{\upsilon}}(h) = \sum_{i=1}^N\upsilon_iF_i \leq \left(\sum_{i=1}^N\upsilon_i^p\right)^{\frac{1}{p}}\left(\sum_{i=1}^N F_i^{r\beta}\right)^\frac{1}{r\beta} = ||\bm{\upsilon}||_p\Tilde{L}_{r\beta}(h),\;\;\;\;\frac{1}{p}+\frac{1}{r\beta}=1
    \end{equation}
    Plugging $\hat{L}_{\bm{\upsilon}}(h) \leq ||\bm{\upsilon}||_p\Tilde{L}_{r\beta}(h)$ into Eq. \ref{eq:proof} yields the results.
\end{proof}

\section{Additional Experimental Settings}
\label{app:experimental}
 We compare UDJ-FL against multiple baselines that span all four of the distributive justice types. For utilitarian, we test against PropFair \cite{zhang2022proportional} and $q$-FFL \cite{li2019fair}. For egalitarian, we test against TERM \cite{li2020tilted} and FedMGDA+ \cite{hu2020fedmgda+}. For Rawls' difference principle, we test against PropFair \cite{zhang2022proportional}, AFL \cite{mohri2019agnostic}, and $q$-FFL \cite{li2019fair}. And for desert, we test against CFFL \cite{lyu2020collaborative}. In Table \ref{tab:hyperparam-lists} we list the tested hyperparameters for each baseline as well as the chosen hyperparameter for each dataset. In general, we tested the same hyperparameter ranges as \cite{zhang2022proportional}, or those defined in the individual papers. For learning rates, we tested $\eta=\{0.1, 0.01, 0.001\}$ and found that in all settings $\eta=0.1$ achieved the best overall results. In certain cases, such as PropFair and $q$-FFL, multiple hyperparameters were chosen as different fairness values were obtained under different hyperparameter settings. We detail which hyperparameter achieved each setting in our main results shown in Table \ref{tab:results}. In all cases, we chose the hyperparameters that obtained the best fairness results, not those that achieved the best accuracy (except for utilitarian fairness).
 
\begin{table}[h!]
\centering
\caption{Tested and chosen hyperparameters for each baseline.}
\label{tab:hyperparam-lists}
\begin{tabular}{cc|cc|}
\cline{3-4}
 &  & \multicolumn{2}{c|}{Chosen Hyperparameter} \\ \hline
\multicolumn{1}{|c|}{Baseline} & Range & \multicolumn{1}{c|}{Dirty-MNIST} & CURE-TSR \\ \hline
\multicolumn{1}{|c|}{PropFair} & $M=\{2,3,4,5\}$ & $M=\{2,3\}$ & $M=\{2,3\}$ \\
\multicolumn{1}{|c|}{$q$-FFL} & $q=\{0, 0.1, 1, 2, 5\}$ & $q=\{0, 0.1, 5\}$ & $q=\{0, 0.1, 5\}$ \\ 
\multicolumn{1}{|c|}{TERM} & $\alpha=\{0.01, 0.1, 0.5\}$ & $\alpha=0.01$ & $\alpha=0.01$ \\ 
\multicolumn{1}{|c|}{FedMGDA+} & $\epsilon=\{0.05, 0.1, 0.5\}$  & $\epsilon=0.5$ & $\epsilon=0.5$ \\ 
\multicolumn{1}{|c|}{CFFL} & $\alpha=\{1,2,3,4,5\}$ & $\alpha=2$ & $\alpha=2$ \\ 
\hline
\end{tabular}%
\end{table}

\end{document}